\newcommand{\Var}{\operatorname{Var}}
\newcommand{\bE}{\mathbb{E}}
\newtheorem{theorem}{Theorem} 
\newtheorem{lemma}{Lemma}
\begin{document}

\captionsetup[figure]{labelformat=simple, labelsep=period} 

\captionsetup[table]{name={TABLE},labelsep=space} 

\title{Double Actor-Critic with TD Error-Driven Regularization in Reinforcement Learning}

\author{Haohui Chen~\orcidlink{0000-0001-9660-0948}, Zhiyong Chen~\orcidlink{0000-0002-2033-4249}, Aoxiang Liu~\orcidlink{0009-0000-8047-1192}, and Wentuo Fang~\orcidlink{0000-0002-0374-3534} 

\thanks{Haohui Chen, Aoxiang Liu, and Wentuo Fang are with the School of Automation, Central South University, Changsha 410083, China (e-mail: \href{mailto:haohuichen@csu.edu.cn}{haohuichen@csu.edu.cn}; \href{mailto:ordisliu@csu.edu.cn}{ordisliu@csu.edu.cn};\href{mailto:wentuo.fang@outlook.com}{wentuo.fang@outlook.com}).}
\thanks{Zhiyong Chen is with the School of Engineering, University of Newcastle, Callaghan, NSW 2308, Australia (e-mail: \href{mailto:zhiyong.chen@newcastle.edu.au}{zhiyong.chen@newcastle.edu.au}).}
\thanks{Corresponding author: Zhiyong Chen.}}

\maketitle

\begin{abstract}
To obtain better value estimation in reinforcement learning, we propose a novel algorithm based on the double actor-critic framework with temporal difference error-driven regularization, abbreviated as TDDR. TDDR employs double actors, with each actor paired with a critic, thereby fully leveraging the advantages of double critics. Additionally, TDDR introduces an innovative critic regularization architecture. Compared to classical deterministic policy gradient-based algorithms that lack a double actor-critic structure, TDDR provides superior estimation. Moreover, unlike existing algorithms with double actor-critic frameworks, TDDR does not introduce any additional hyperparameters, significantly simplifying the design and implementation process. Experiments demonstrate that TDDR exhibits strong competitiveness compared to benchmark algorithms in challenging continuous control tasks.
\end{abstract}

\begin{IEEEkeywords}
Reinforcement learning, Actor-critic, Double actors, Critic regularization, Temporal difference
\end{IEEEkeywords}

\section{Introduction}

\IEEEPARstart{R}{einforcement} learning (RL) has gained significant attention as a versatile research area, with applications spanning recommender systems \cite{yunDoublyConstrainedOffline2024}, autonomous driving \cite{tothTabularQlearningBased2023}, controller design \cite{pengModelBasedChanceConstrainedReinforcement2024}, and medical fields \cite{collomb-clercHumanThalamicLowfrequency2023}. One of the most well-known RL algorithms, deep Q-learning (DQN), has demonstrated success in learning optimal policies \cite{mnihHumanlevelControlDeep2015}. However, DQN struggles in continuous control scenarios, particularly in environments with high-dimensional action spaces \cite{anschelAveragedDQNVarianceReduction}, \cite{zhangLearningAutomataBasedMultiagent2021}.

In contrast, actor-critic (AC) algorithms, such as trust region policy optimization (TRPO) \cite{schulmanTrustRegionPolicy2017}, proximal policy optimization (PPO) \cite{schulmanProximalPolicyOptimization2017}, and deep deterministic policy gradient (DDPG) \cite{lillicrapContinuousControlDeep2019}, are better suited for continuous control tasks. Further insights into AC-based algorithms are discussed in other works \cite{hanDiversityActorCriticSampleAware}, \cite{banerjeeImprovedSoftActorCritic2024}, \cite{beikmohammadiAcceleratingActorcriticbasedAlgorithms2024}.
However, DDPG and related algorithms are prone to overestimation bias. To address this, Fujimoto et al. introduced the twin delayed deep deterministic policy gradient (TD3) algorithm \cite{fujimotoAddressingFunctionApproximation}, which leverages double critics to reduce overestimation. Notably, TD3 still employs a single critic for updates and action selection.

Building on the double actor-critic (DAC) framework, several advanced algorithms, such as double actors regularized critics (DARC) \cite{lyuEfficientContinuousControl2022}, softmax deep double deterministic policy gradients (SD3) \cite{panSoftmaxDeepDouble}, and generalized-activated deep double deterministic policy gradients (GD3) \cite{lyuValueActivationBias2023}, have introduced different regularization techniques to improve value estimation.

To fully leverage the potential of double critics, we propose the temporal difference error-driven regularization (TDDR) algorithm, built upon the DAC framework. Specifically, TDDR employs clipped double Q-learning (CDQ) with double actors (DA-CDQ), generating four Q-values and utilizing the TD error from their target networks to guide the selection of the appropriate Q-value for critic updates. A more detailed comparison of TDDR with existing algorithms will be provided in Section~\ref{sec:Related Work}.

The primary contributions of TDDR are as follows.

1. TDDR demonstrates strong competitiveness in challenging continuous control tasks, outperforming benchmark algorithms without introducing additional hyperparameters.

2. A key innovation is the introduction of TD error-driven regularization within the DAC framework. Furthermore, TDDR incorporates the CDQ approach, which utilizes double target actors for action evaluation, resulting in CDQ based on double actors.

3. A convergence proof for TDDR is provided under both random and simultaneous updates, and its performance is validated through numerical benchmark comparisons.

The remainder of the paper is organized as follows. Section~\ref{sec:Related Work} reviews the related work and compares the benchmark algorithms with the proposed TDDR. Section~\ref{sec:Preliminaries} covers the preliminaries and motivations for our study. In Section~\ref{sec:Proposed Method}, we provide a detailed explanation of the TDDR algorithm. Section~\ref{sec:Convergence Analysis} presents the convergence analysis of the TDDR algorithm. Section~\ref{sec:experiments} includes experimental results on MuJoCo and Box2d continuous control tasks, along with a discussion of the performance relative to benchmark algorithms. Finally, Section~\ref{sec:Conclusion} concludes the paper and outlines potential future extensions of this work.

\section{Related Work}
\label{sec:Related Work}

This section is divided into two parts: the first part reviews related work, and the second part compares the benchmark algorithms with TDDR.

\subsection{Related Work} 

DDPG and TD3 serve as benchmark algorithms in this paper, both utilizing TD error for updating critic parameters and are widely adopted within the AC framework. Several advanced algorithms build upon this foundational structure.
For example, Wu et al. \cite{wuReducingEstimationBias2020} introduced the triplet-average deep deterministic (TADD) policy gradient algorithm, which incorporates weighted TD error regularization for critic updates. However, TADD introduces two additional hyperparameters compared to TD3, which are environment-dependent and can increase computational complexity, potentially impairing performance if not selected appropriately.

Cheng et al. \cite{chengRobustActorCriticRelative2023} proposed the robust actor-critic (RAC) algorithm, which incorporates constraints on the entropy between policies. While RAC demonstrates some hyperparameter insensitivity in certain environments, its overall performance remains affected by the additional hyperparameters. Similarly, Li et al. \cite{liAlleviatingEstimationBias2022} introduced the co-regularization-based deep deterministic (CoD2) policy gradient algorithm, which alternates the TD targets between DDPG and TD3; however, its performance is highly sensitive to the regularization coefficient.

Conversely, other policy-based regularization algorithms exhibit lower sensitivity to hyperparameter settings \cite{zhouPromotingStochasticityExpressive}. Li et al. \cite{liImprovingExplorationActor2024} introduced the weakly pessimistic value estimation and optimistic policy optimization (WPVOP) algorithm, which employs a pessimistic estimation to adjust the TD target. However, its dependence on hyperparameters varies across environments, leading to increased computational complexity. Cetin et al. \cite{cetinLearningPessimismReinforcement2023} proposed the generalized pessimism learning (GPL) algorithm, which integrates dual TD targets to counteract bias from pessimistic estimation, resulting in improved value estimation with reduced computational overhead.

Additionally, Li et al. \cite{liMultiactorMechanismActorcritic2023} developed the multi-actor mechanism (MAM), which offers multiple action choices within a single state to optimize policy selection. Despite its higher computational cost, MAM effectively enhances value estimation, demonstrating the advantages of multi-actor approaches \cite{lyuEfficientContinuousControl2022}.

While these algorithms have shown promising results, limited research has investigated how DAC can achieve improved value estimation without introducing additional hyperparameters, a key issue addressed in this paper. Notable algorithms in this context, which serve as our benchmark methods built upon the DAC framework, include DARC \cite{lyuEfficientContinuousControl2022}, SD3 \cite{panSoftmaxDeepDouble}, and GD3 \cite{lyuValueActivationBias2023}.

\subsection{Comparison}

The major differences between the benchmark algorithms and the proposed TDDR are summarized in Table~\ref{tab:components}, highlighting their features in key components. A discussion of additional differences follows below, with technical details provided in Sections~\ref{sec:Preliminaries} and \ref{sec:Proposed Method}.
 
The first distinction is the structure of TDDR compared to the benchmark algorithms. TDDR clearly differs from DDPG and TD3 in terms of the number of actors and critics. While the other DAC benchmark algorithms directly apply TD error regularization for critic updates, similar to DDPG and TD3, TDDR introduces a novel form of critic regularization driven by the TD error of its target networks, representing a significant innovation.

Another critical difference lies in the hyperparameters used in the benchmark algorithms compared to TDDR. Specifically, TDDR does not introduce additional hyperparameters beyond those in TD3, while other DAC-based benchmark algorithms do. The introduction of extra hyperparameters can significantly destabilize the learning process if not carefully tuned, often leading to increased standard deviation in performance. Consequently, finding effective hyperparameters presents a challenge in reinforcement learning. Moreover, reducing the number of additional hyperparameters while ensuring improved value estimation is a vital issue that needs to be addressed.

DARC introduces two regularization coefficients, $\lambda$ and $\nu$. The coefficient $\lambda$ constrains the gap between Q-values during critic updates, while $\nu$ is used to mitigate the risk of overestimation. The effectiveness of $\lambda$ relies on achieving significantly improved value estimation. DARC simplifies computation by deriving a single value from the two Q-values produced by the double actors. However, if one Q-value is overestimated, the corrective capability of the other Q-value depends on $\nu$. Both $\lambda$ and $\nu$ can vary across different tasks.

SD3 adds two additional hyperparameters: the number of noise samples (NNS) and the parameter $\beta$. NNS controls the influence of value estimation but shows limited sensitivity in SD3. The parameter $\beta$ is crucial for balancing variance and bias in value estimation. GD3 extends this by introducing four hyperparameters, including NNS, $\beta$, a bias term $b$, and activation functions. While NNS and $\beta$ function similarly in both SD3 and GD3, the bias term $b$ and the choice of activation functions are specific to GD3. Both SD3 and GD3 utilize the minimum Q-value from the double actors.

In summary, DARC, SD3, and GD3 incorporate additional hyperparameters to enhance value estimation, which increases computational complexity and can lead to performance instability if these hyperparameters are not finely tuned. In contrast, TDDR does not introduce any additional hyperparameters compared to TD3, making it easier to implement.

\begin{table}[!t] 
  \caption{Comparison of TDDR with benchmark algorithms \label{tab:components}}
  \centering
  \small
  \begin{tabular}{cccc}
    \toprule
    Algorithms &  Regularization & \makecell{Double \\ Actors} & 
     \makecell{Double \\ Critics} \\
    \midrule
    DDPG\cite{lillicrapContinuousControlDeep2019} & \ding{56} & \ding{56} & \ding{56} \\
    TD3\cite{fujimotoAddressingFunctionApproximation} & $\min$ & \ding{56} & \ding{52} \\
    DARC\cite{lyuEfficientContinuousControl2022} & weighted & \ding{52} & \ding{52} \\
    SD3\cite{panSoftmaxDeepDouble} & $\min$ & \ding{52} & \ding{52} \\
    GD3\cite{lyuValueActivationBias2023} & $\min$ & \ding{52} & \ding{52} \\
    TDDR (this work) & TD error-driven & \ding{52} & \ding{52} \\
    \bottomrule
  \end{tabular}
\end{table}

\section{Preliminaries}
\label{sec:Preliminaries}

The interaction between an agent and the environment follows a standard RL process, which is modeled as a Markov decision process (MDP). MDP is defined by the tuple $M = (S, A, P, R(s,a), \gamma)$, where $S$ is the state space, $A$ is the action space, $P$ is the state transition probability, $R(s,a)$ is the reward function, and $\gamma \in [0,1]$ is the discount factor.
The goal of RL is to find the optimal policy that maximizes cumulative rewards. Value estimation is commonly used to evaluate the effectiveness of policies, with higher value estimations generally indicating better policies\cite{lyskawaACERACEfficientReinforcement2024}. We briefly describe several relevant reinforcement learning algorithms, including DDPG, TD3, DARC, SD3, and GD3, which serve as benchmarks. Building on these, we propose and compare the new TDDR algorithm.

\subsection{Deterministic Policy Gradient}

 Silver et al. \cite{silverDeterministicPolicyGradient} introduced the deterministic policy gradient (DPG) algorithm, which uses deterministic policies instead of stochastic ones. Let $\pi_\phi$ refer to the policy with parameters $\phi$. Specifically, $\pi_\phi(s)$ denotes the action $a$ output by the policy for the given state $s$. To maximize the cumulative discounted rewards $R(s,a)$, $\phi$ is updated as $\phi \leftarrow  \phi +\alpha \nabla_\phi J(\phi)$, where $\alpha$ is the learning rate. 
 The DPG is defined as:
\begin{equation}
  \label{deqn_ex1a}
  \nabla_\phi J(\phi) = N^{-1} \sum_{s} (\nabla Q_{\theta}(s,a) \bigg|_{a=\pi_\phi(s)} \nabla_{\phi} \pi_\phi(s))
\end{equation}
where $J(\phi)$ is the objective function, typically the expected return we aim to maximize, and $\nabla Q_{\theta}(s,a) \big|{a=\pi_\phi(s)}$ represents the gradient of the action-value function $Q$ with respect to its parameters $\theta$, evaluated at $a = \pi_\phi(s)$. The gradient $\nabla Q_{\theta}(s,a)$, for updating $\theta$, is usually obtained by minimizing $N^{-1} \sum_s (y-Q_{\theta}(s,a))^2$,
where $y$ is the TD target, $y-Q_{\theta}(s,a)$ is the TD error, and $N$ represents the batch size.

Lillicrap et al. \cite{lillicrapContinuousControlDeep2019} introduced deep neural networks into DPG and proposed the DDPG algorithm. DDPG introduces the concept of target networks and utilizes a soft update approach for both policy and value parameters: 
\begin{align}
\theta^{\prime} \leftarrow \tau \theta + (1 - \tau) \theta^{\prime},\;\phi^{\prime} \leftarrow \tau \phi + (1 - \tau) \phi^{\prime} \label{targetupdate}
\end{align}
where $\theta^{\prime}$ represents the parameters of the critic target network, and $\phi^{\prime}$ represents the parameters of the actor target network. Here, $\tau \ll 1$ is a constant. DDPG is an effective RL algorithm for continuous control tasks \cite{zhangAsynchronousEpisodicDeep2021}. 
The TD target for DDPG, computed using the target networks, is: 
\begin{equation}
y = r + \gamma Q_{\theta^{\prime}} (s^{\prime}, \pi_{\phi^\prime} (s^{\prime})), \label{TD1}
\end{equation}
where $s^{\prime}$ is the next state, and the output of $\pi_{\phi^\prime} (s^{\prime})$ represents the next action, denoted as $a^{\prime}$.
 
TD3 \cite{fujimotoAddressingFunctionApproximation} is an enhanced version of DDPG that adopts a new approach of delayed policy updates to mitigate the adverse effects of update variance and bias. It also incorporates a regularization approach from machine learning by adding noise to the target policy to smooth out value estimations.

\subsection{Double Actor-Critic}

Double critics utilize two independent critic networks that generate separate value estimations without mutual influence. Originating from the concept of double Q-learning, this algorithm employs dual critics to decouple actions, thereby mitigating the overestimation issue. For example, the TD target in TD3 is defined as:
 \begin{equation} 
y=r + \gamma \min_{i=1,2}(Q_{\theta_i^{\prime}}(s^{\prime},a^{\prime})),\label{TDTD3}
 \end{equation}
where $Q_{\theta_1^{\prime}}$ and $Q_{\theta_2^{\prime}}$ represent the target networks of the double critics. The action $a^{\prime}$ is defined as: 
\begin{align}
a^{\prime} = \pi_{\phi^{\prime}}(s^{\prime})+\epsilon \label{aprime0}
\end{align}
incorporating noise $\epsilon$. The TD target \eqref{TDTD3} is used to define the TD error, $y-Q_{\theta_i}(s,a)$ for $i=1,2$, and update the double critic networks $Q_{\theta_1}$ and $Q_{\theta_2}$. TD3 demonstrates superior value estimation compared to DDPG in MuJoCo by OpenAI \cite{dhariwal2017openai}. Alongside TD3, other benchmark algorithms like DARC, SD3, and GD3 also integrate double critics.

In addition to double critics, the use of double actors $\pi_{\phi_1}$ and $\pi_{\phi_2}$, along with their 
target networks $\pi_{\phi^\prime_1}$ and $\pi_{\phi^\prime_2}$,
effectively enhances action exploration efficiency and prevents policies from settling into local optima, thereby achieving improved value estimation \cite{lyuEfficientContinuousControl2022, panSoftmaxDeepDouble}. 
The actions $a_i^{\prime}$, $i=1,2$, generated by the target networks are denoted as:
\begin{align}
a_i^{\prime} = \pi_{\phi_i^{\prime}}(s^{\prime})+\epsilon. \label{aprime}
\end{align}
Benchmark algorithms such as DARC, SD3, and GD3 also incorporate double actors.

\subsection{Regularization}

The TD target for DDPG \eqref{TD1} can be rewritten as 
 \begin{equation} 
y= r+\gamma \psi \label{TDpsi}
 \end{equation}
where 
 \begin{equation} 
\psi = Q_{\theta^{\prime}} (s^{\prime},  a^{\prime})\label{psiDDPG}
 \end{equation}
When double critics are used, the value $\psi$ for each critic following DDPG \eqref{TD1} is given as
\begin{equation}
\psi_i =   Q_{\theta_i^{\prime}} (s^{\prime}, a^\prime). \label{TD2}
\end{equation}
However, this value is not used directly. For example, it is modified in \eqref{TDTD3} to
 \begin{equation} 
\psi=\min_{i=1,2} Q_{\theta_i^{\prime}}(s^{\prime}, a^\prime). \label{hatpsiTD3}
 \end{equation}
It is noted that the same $\psi$, and hence the same TD target $y$, is used for both critics. 
The difference between \eqref{TD2} and \eqref{hatpsiTD3} acts as a regularization of the TD error.

Different forms of regularization have been proposed in the literature, varying in the construction of $\psi$.
For TD3, $\psi$ is defined in \eqref{hatpsiTD3}
where it uses the smaller Q-values from the double critics to compute $\psi$.
For DARC, $\psi$ is computed as:
\begin{align}
    \psi=& (1-\nu)\max_{j=1,2}\min_{i=1,2}Q_{\theta^{\prime}_i}(s^{\prime},a_j^\prime) \nonumber \\ &+\nu\min_{j=1,2}\min_{i=1,2}Q_{\theta^{\prime}_i}(s^{\prime},a_j^\prime)\label{hatpsiDARC}
\end{align}
where $\nu$ is the weighting coefficient. DARC computes $\psi$ by combining the Q-values of the double critics and double actors in a convex combination. In addition to the hyperparameter $\nu$ used to balance overestimation and underestimation, DARC introduces another regularization parameter, $\lambda$, which restricts the critics from differing too significantly.

For SD3 and GD3, $\psi$ is constructed as:
\begin{equation}
 \psi=\min_{j,i=1,2} Q_{\theta^{\prime}_i}(s^{\prime},a_j^\prime),\label{hatpsiSD3}
\end{equation}
which is similar to TD3 but incorporates actions from double actors.
SD3 introduces two hyperparameters. NNS influences the effect of value estimation but is not highly sensitive. In contrast, the parameter $\beta$ impacts the effectiveness and bias of value estimation: a lower $\beta$ decreases variance, while a higher $\beta$ reduces bias.
In addition to NNS and $\beta$, GD3 introduces a bias term $b$ and various activation functions, which are environment-dependent. An appropriately chosen $b$ results in softer value estimation, but if $b$ is too large, it can lead to poor performance. Activation functions, such as polynomial and exponential, are also determined by GD3.

\section{The TDDR Algorithm}
\label{sec:Proposed Method}

The TDDR algorithm adopts DAC with their respective target networks. These eight networks are denoted as
$Q_{\theta_i}$, $Q_{\theta^\prime_i}$, $\pi_{\phi_i}$, $\pi_{\phi^\prime_i}$, for $i=1,2$, as defined earlier. 
In Fig.~\ref{fig_structure}, these neural networks are simply labeled as $Q_i$,$Q_i^\prime$,$A_i$,$A_i^\prime$, for $i=1,2$, respectively. 

The core of the algorithm involves updating the eight networks. The target networks  $Q_i^\prime$ and $A_i^\prime$ are updated following conventional rules, similar to \eqref{targetupdate}, succinctly denoted as: 
\begin{align*}
\theta_i^{\prime}\leftarrow(\theta_i,\theta_i^{\prime}),\;
  \phi_i^{\prime}\leftarrow(\phi_i,\phi_i^{\prime})
\end{align*} 
for $i=1,2$. The update of each actor networks $A_i$ is performed as $\phi_i \leftarrow  \phi_i +\alpha \nabla_{\phi_i} J(\phi_i)$
where the DPG $\nabla_{\phi_i} J(\phi_i)$ is defined in \eqref{deqn_ex1a} using the corresponding critic network $Q_{\theta_i}$. 
 
Key innovations in the proposed TDDR algorithm focus on updating the critic networks $Q_i$, parameterized by $\theta_i$. The first innovation involves adopting the clipped double Q-learning (CDQ) approach, which utilizes double target actors to evaluate actions, resulting in CDQ based on double actors (DA-CDQ). Regularizing the TD error plays a crucial role in updating the double critic networks. The second innovation introduces a novel critic regularization architecture (CRA), which is driven by the TD error of their target networks. These two features, along with a comparative analysis against benchmark algorithms, are elaborated upon in the subsequent sections.

Lastly, it is noteworthy that TDDR also utilizes the cross-update architecture employed in DARC, where only one pair, either $(Q_1,A_1)$ or $(Q_2,A_2)$, is updated at each step.

\begin{figure*}[!t]
  \centering
  \vspace{-2mm}
  \includegraphics[width=0.76\textwidth]{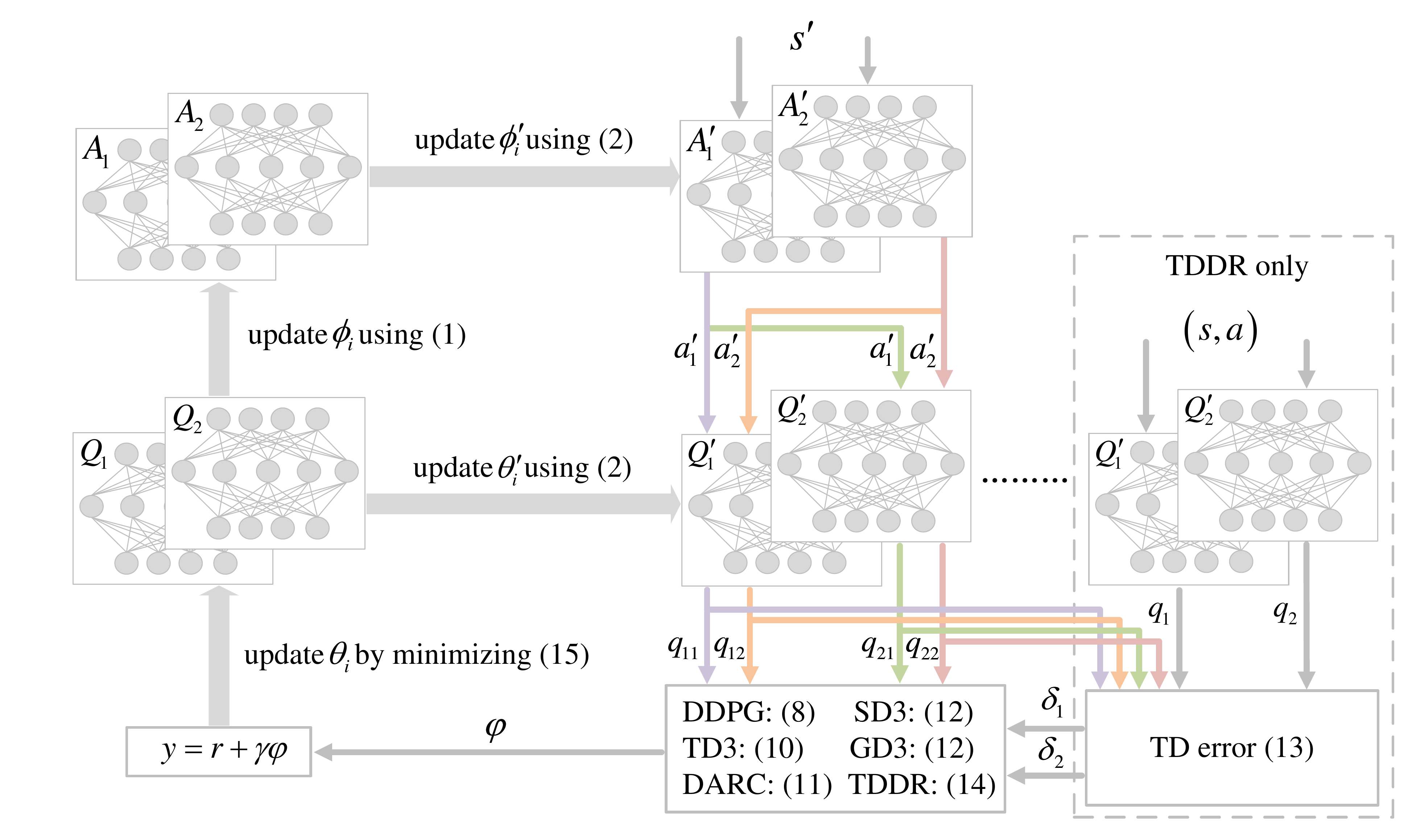}
  \caption{Architecture of TDDR and the benchmark algorithms: $q_{ij} = Q_{\theta_i'}(s', a_j')$ and $q_i = Q_{\theta_i'}(s, a)$; the duplication of $Q_1'/Q_2'$ indicates that the same networks are used with different inputs; the action $a_i'$ is generated by $A_i'$ following \eqref{aprime}. }
  \label{fig_structure}
  \end{figure*}

\subsection{Double Actors with CDQ}

The proposed algorithm adopts the CDQ approach, similar to TD3, by incorporating noise during the action sampling process as defined in \eqref{aprime} for each target actor $i=1,2$. Here, $\epsilon\sim \operatorname{clip}({\cal N}(0,\sigma),-c,c)$ represents Gaussian noise with mean $0$ and variance $\sigma$, clipped to the range of $[-c,c]$ for a positive constant $c$. This clipping effectively limits the noise magnitude, ensuring that actions remain within a reasonable range and preventing excessive perturbations.

The DA-CDQ method, inspired by TD3, employs the min operator to clip both value estimations during the update process. In contrast to TD3's single actor, DA-CDQ utilizes double actors, where each actor network independently selects actions. These actions are then evaluated by double critics to assess their value. Importantly, each actor network can autonomously learn and refine its own policy. DA-CDQ enhances exploration capabilities and effectively prevents actions from becoming trapped in local optima, which can occur with a single actor setup like TD3. This approach facilitates the discovery of potentially optimal policies \cite{lyuEfficientContinuousControl2022}. Using a single actor, as in TD3, results in identical training targets for the double critics, potentially limiting the advantages they offer \cite{panSoftmaxDeepDouble}.
 
\subsection{Critic Regularization Architecture}

Each actor target generates an action $a_i^{\prime}$ as in \eqref{aprime}, which, together with the target networks of double critics, can be used to define the TD error of the critic targets $Q_i^{\prime}$ as follows:
\begin{equation} \label{deltai}
 \delta_i=r+\gamma \min_{j=1,2}(Q_{\theta_j^{\prime}}(s^{\prime},a_i^{\prime}))-\min_{j=1,2}(Q_{\theta_j^{\prime}}(s,a))  
\end{equation}
for $i = 1, 2$.
For each action $a_i^{\prime}$, both critic target networks $Q_j^{\prime}$, $j = 1, 2$, are used to estimate the values for the next state $s^{\prime}$. The smaller value is used to define the TD target $r + \gamma \min_{j=1,2} (Q_{\theta_j^{\prime}} (s^{\prime}, a_i^{\prime}))$.
Correspondingly, the two critic target networks generate two action-values for $(s, a)$, and the smaller value $\min_{j=1,2} (Q_{\theta_j^{\prime}}(s, a))$ is utilized as the actual action-value. The TD target and the actual action-value define the TD error in \eqref{deltai} for each actor target network.

It is important to note that these TD errors in \eqref{deltai} are not for the critics $Q_i$ nor are they used to update $\theta_i$ directly. However, these TD errors drive the calculation of the regularization of the TD errors of $Q_i$. This TD error-driven regularization is a novel feature compared to the existing regularizations introduced earlier. The specific computation of $\psi$ for TDDR is given as
\begin{equation} \label{psiTDDR}
\psi= \begin{aligned}
  \begin{cases}
\min_{i=1,2}(Q_{\theta_i^{\prime}}(s^{\prime},a_1^{\prime})), & \text{if } \big|\delta_1\big|\leq\big|\delta_2\big|\\
\min_{i=1,2}(Q_{\theta_i^{\prime}}(s^{\prime},a_2^{\prime})), & \text{if } \big|\delta_1\big|> \big|\delta_2\big| 
  \end{cases}.
  \end{aligned}
  \end{equation}
In the existing calculations of $\psi$ in \eqref{hatpsiTD3}, \eqref{hatpsiDARC}, and \eqref{hatpsiSD3}, the smaller action-value of the double critic target networks, evaluated either on a single actor target or double actor targets, is directly used. In contrast, in \eqref{psiTDDR}, the smaller action-value, evaluated on the selected actor target network that gives the smaller TD error ($|\delta_1|$ or $|\delta_2|$), is used.

Once $\psi$ is determined in \eqref{psiTDDR}, the actual TD error for the critics $Q_i$ is calculated as in \eqref{TDpsi}. The gradient $\nabla Q_{\theta_i}(s,a)$, used for updating $\theta_i$, is obtained by minimizing 
\begin{equation}
N^{-1}\sum_s (y-Q_{\theta_i}(s,a))^2. \label{NyQ}
\end{equation} 

Using DAC to compute $\psi$ as in \eqref{psiTDDR} in TDDR offers several advantages. First, it avoids overestimation by selecting the smaller value based on the comparison between $|\delta_1|$ and $|\delta_2|$. This ensures that the TD target is calculated based on the action-value generated by either target policy $A_1^{\prime}$ or $A_2^{\prime}$, reducing the risk of overestimation.

Additionally, it balances exploration and exploitation. By using two Q-values to calculate the TD target, the approach encourages a balance between exploring new actions and exploiting known ones. When comparing $|\delta_1|$ and $|\delta_2|$, if the current policy is $A_1^{\prime}$, then $A_1^{\prime}$ is exploited while $A_2^{\prime}$ is explored, and vice versa. This dynamic helps in discovering potentially optimal policies while maintaining robustness in the learning process.

TDDR is summarized in Algorithm~\ref{alg:TDDR}.

\begin{algorithm}[tb]
    \caption{\underline{TD} Error-\underline{D}riven \underline{R}egularization (TDDR)}
    \label{alg:TDDR}
    \begin{algorithmic}[1] 
    \STATE Initialize critic networks $Q_{\theta_1},Q_{\theta_2}$ and actor networks $\pi_{\phi_1},\pi_{\phi_2}$ with random parameters $\theta_1,\theta_2,\phi_1,\phi_2$ 
    \STATE Initialize target networks $\theta_1^{\prime}$, $\theta_2^{\prime}$, $\phi_1^{\prime}$, $\phi_2^{\prime}$
    \STATE Initialize replay buffer $\mathfrak{R}$
    \FOR{$\Theta$ = 1 to $N$}
    \STATE Select action $a$ with $\max_i\max_jQ_{\theta_i}(s,\pi_{\phi_j}(s))$ added $\epsilon\sim\mathcal{N}(0,\sigma)$
    \STATE Execute action $a$ and observe reward $r$, new state $s^{\prime}$ and done flag $d$
    \STATE Store transitions in the experience replay buffer, i.e., $(s_\Theta,a_\Theta,r_\Theta,s_{\Theta+1}^{\prime},d_\Theta)$
    \FOR{$i = 1,2$}
    \STATE Sample $\{(s_{\Theta},a_{\Theta},r_{\Theta},s_{\Theta+1}^{\prime},d_{\Theta})\}_{\Theta=1}^{N}\sim\mathfrak{R}$\
    \STATE Calculate $a_i$ with \eqref{aprime}
    \STATE Calculate $\delta_i$ with \eqref{deltai}
    \STATE Calculate $y$ with \eqref{TDpsi}, \eqref{psiTDDR}
    \STATE Update critic $\theta_{i}$ by minimizing \eqref{NyQ}
    \STATE Update actor $\phi_{i}$ with policy gradient: \\ $N^{-1}\sum_s(\nabla Q_{\theta_i}(s,a)|_{a=\pi_{\phi_i}(s)}\nabla_{\phi_i(s)}\pi_{\phi_i}(s))$\
    \STATE Update target networks: \\ $\theta_i^{\prime}\leftarrow(\theta_i,\theta_i^{\prime}),\ \phi_i^{\prime}\leftarrow(\phi_i,\phi_i^{\prime})$
    \ENDFOR
    \ENDFOR
    \end{algorithmic}
    \end{algorithm}

\subsection{Comparative Analysis of Algorithms}

The architecture of TDDR is depicted in Fig.~\ref{fig_structure}. Benchmark algorithms can be represented similarly: DDPG consists of one critic and one actor, TD3 incorporates double critics with one actor, and DARC, SD3, and GD3 have DAC architectures like TDDR. Using this figure, the specific differences between TDDR and these benchmark algorithms are analyzed below.

  When the eight neural networks reduce to four, with $\theta_i$, $\theta_i^{\prime}$, $\phi_i$, and $\phi_i^{\prime}$ for $i=1,2$ being replaced by $\theta$, $\theta^{\prime}$, $\phi$, and $\phi^{\prime}$, respectively, TDDR simplifies to DDPG, which is the simplest mode among these six algorithms.
With only $\phi_i$ and $\phi_i^{\prime}$ for $i=1,2$ replaced by $\phi$ and $\phi^{\prime}$, respectively, the remaining six neural networks of TDDR constitute TD3.

Given that DARC, SD3, and GD3 share the same network architecture as TDDR, their distinctions are subtle but impactful in terms of performance. The first distinction is the use of CDQ in TDDR, which is absent in the other algorithms. The second distinction is in the definition of the TD error-driven $\psi$ in \eqref{psiTDDR}, which differs significantly from those in other algorithms, such as \eqref{hatpsiDARC} and \eqref{hatpsiSD3}. This difference is highlighted by the ``TDDR only'' block in the figure.

The third distinction is that TDDR does not introduce additional hyperparameters beyond those in TD3, whereas DARC, SD3, and GD3 introduce two, two, and four additional hyperparameters, respectively. While hyperparameters can increase the complexity of the algorithms and the tuning workload, TDDR simplifies the design and implementation process by avoiding this additional complexity.

For instance, the critics in DARC are regularized to be close according to \eqref{hatpsiDARC}, ensuring they are not isolated. While the mutually regularized double critics in DARC perform well in value function estimation, their performance is significantly influenced by the regularization coefficient $\nu$. Appropriate regularization coefficients can lead to better value estimation, whereas test results show that unsuitable regularization coefficients can significantly increase the standard deviation of the results, reducing DARC's stability. In contrast, the mutually independent double critics structure in TDDR is not affected by hyperparameter selection. In TDDR, the critics are fully utilized, with each critic responsible for its corresponding counterpart actor, and both critics contributing to policy improvement.

\section{Convergence Analysis}
\label{sec:Convergence Analysis}

Convergence analysis has been studied in double Q-learning \cite{hasselt2010double}, which can be formulated as follows:
\begin{align} 
&Q_{t+1}^A(s_t,a_t) = Q_t^A(s_t,a_t) + \alpha_t(s_t,a_t) \times \nonumber\\
& \quad (r_t+  \gamma Q_t^B(s_{t+1}, a_1^*) -Q_t^A(s_t,a_t)) \nonumber\\
&Q_{t+1}^B(s_t,a_t) = Q_t^B(s_t,a_t) + \alpha_t(s_t,a_t) \times\nonumber\\
& \quad (r_t+  \gamma  Q_t^A(s_{t+1}, a_2^*) -Q_t^B(s_t,a_t)) \label{policydQ} \end{align}
where $Q_t^A(s_t,a_t)$ and $Q_t^B(s_t,a_t)$ represent the updates of double Q-values, and $r_t = R(s_t, a_t)$ is the reward for the pair $(s_t, a_t)$. The actions $a_1^*$ and $a_2^*$ are defined as:
\begin{align*}
a_1^* = \arg\max_{a}Q_t^A (s_{t+1}, a),\;
a_2^* = \arg\max_{a}Q_t^B (s_{t+1}, a).\end{align*} 

It is noted that $r_t + \gamma Q_t^B(s_{t+1}, a_1^*)$ is the TD target for $Q_t^A(s_t, a_t)$, and vice versa for swapping $A$ and $B$. Treating the critics $Q_{\theta_1}(s, a)$ and $Q_{\theta_2}(s, a)$ as $Q_t^A(s, a)$ and $Q_t^B(s, a)$, respectively, the TD target in \eqref{TDpsi} is $r_t + \gamma \psi_t$, explicitly depending on $t$. 
In the definitions of $\psi$ in \eqref{hatpsiTD3}, \eqref{hatpsiDARC}, \eqref{hatpsiSD3}, or \eqref{psiTDDR}, $\psi$ cannot be simply expressed as $Q_t^B(s_{t+1}, a_1^*)$ or $Q_t^A(s_{t+1}, a_2^*)$ as in \eqref{policydQ}. The first difference is that $\psi$ is calculated from the critic and actor target networks, while $Q_t^B(s_{t+1}, a_1^*)$ or $Q_t^A(s_{t+1}, a_2^*)$ are the directly calculated optimal values. The influence of adding target networks to convergence is too complicated to analyze here.

The second difference is that $\psi$ is not $Q_t^B(s_{t+1}, a_1^*)$ or $Q_t^A(s_{t+1}, a_2^*)$, but rather the smaller value generated by the critics, denoted as $\min_{i=1,2} Q_{\theta_i^{\prime}}$ in all the aforementioned definitions of $\psi$. To capture this difference in all the aforementioned algorithms, including DARC, SD3, GD3, and TDDR, the formulation in \eqref{policydQ} is modified to
\begin{align} 
&Q_{t+1}^A(s_t,a_t) = Q_t^A(s_t,a_t) + \alpha_t(s_t,a_t)(r_t+ \nonumber\\
&\quad  \gamma \min\{Q_t^B(s_{t+1}, a_{i=1,2}^*), Q_t^A(s_{t+1}, a_{i=1,2}^*)\}-Q_t^A(s_t,a_t)) \nonumber \\
&Q_{t+1}^B(s_t,a_t) = Q_t^B(s_t,a_t) + \alpha_t(s_t,a_t)(r_t+  \nonumber\\
&\quad  \gamma \min\{Q_t^B(s_{t+1}, a_{i=1,2}^*), Q_t^A(s_{t+1}, a_{i=1,2}^*)\}-Q_t^B(s_t,a_t)). \label{policy2} \end{align}
Here, $a^*_{i=1,2}$ can be either $a^*_1$ or $a^*_2$, depending on the specific algorithm. For instance, in TDDR, $a^*_{i=1,2}=a^*_1$ when $|\delta_1|\leq|\delta_2|$, and $a^*_{i=1,2}=a^*_2$ otherwise, where $\delta_i$ defined in \eqref{deltai} is modified to 
\begin{align*}
\delta_{i=1,2} =& r_t + \gamma \min\{Q_t^B(s_{t+1}, a_{i=1,2}^*), Q_t^A(s_{t+1}, a_{i=1,2}^*)\} \\
& - \min\{Q_t^B(s_t, a_t), Q_t^A(s_t, a_t)\}
\end{align*}
It is worth noting that the convergence analysis of \eqref{policy2} does not necessarily rely on a particular selection of $a^*_{i=1,2}$.

In \eqref{policy2}, $Q_t^A(s_t, a_t)$ and $Q_t^B(s_t, a_t)$ are updated using the same TD target, denoted as
\begin{align*}
    T_t =r_t+\gamma\min\{Q_t^B(s_{t+1},a_{i=1,2}^*), Q_t^A(s_{t+1}, a_{i=1,2}^*)\},
\end{align*}
both with $a^*_1$ or $a^*_2$. The TD errors are denoted as 
 \begin{align*}
E_t^A=T_t - Q_t^A(s_t, a_t),\;  E_t^B=T_t -Q_t^B(s_t, a_t).\end{align*}
However, in \eqref{policydQ}, $Q_t^A(s_t, a_t)$ and $Q_t^B(s_t, a_t)$ are updated using different TD targets, one with $a^*_1$ and the other with $a^*_2$. This distinction is significant, as it facilitates the convergence of \eqref{policy2} in subsequent analysis.

Next, our aim is to analyze the convergence of \eqref{policy2} to the optimal value $Q_t^*(s_t,a_t)$ under two updating patterns: random updating and simultaneous updating. In the random updating pattern, either $Q_t^A(s_t, a_t)$ or $Q_t^B(s_t, a_t)$ is chosen for update with a probability of $0.5$. In contrast, in the simultaneous updating pattern, both Q-values are updated at the same time. The convergence analysis relies on the following technical lemma. The proof of this lemma can be found in \cite{singh2000convergence}, building on earlier work in \cite{jaakkola1993convergence, bertsekas1995dynamic}.

\begin{lemma} \label{lemma}
	Consider a stochastic process $(\zeta_t, \Delta_t, F_t)$, $t \geq 0$, where $\zeta_t$, $\Delta_t$, and $F_t$ satisfy:
\begin{align} \label{policy3}
		\Delta_{t+1}(x_t)=(1-\zeta_t(x_t)) \Delta_t(x_t)+\zeta_t(x_t) F_t(x_t)
\end{align}
for  $x_t \in X$ and $t\geq 0$. Let $P_t$ be a sequence of increasing $\sigma$-fields such that $\zeta_0$ and $\Delta_0$ are $P_0$-measurable, and $\zeta_t$, $\Delta_t$ and $F_{t-1}$ are $P_t$-measurable, $t=1, 2, \ldots$. Assume that the following hold: 

\begin{enumerate}

\item The set $X$ is finite;
	
\item  $\zeta_t(x_t) \in [0,1]$, $\sum_t \zeta_t(x_t)=\infty$, $\sum_t(\zeta_t(x_t))^{2}<\infty$ with probability 1 and $\forall x \neq x_t: \zeta_t(x)=0$;
	
\item  $\| \bE[F_t | P_t] \| \leq \kappa\| \Delta_t\|+c_t$, where $\kappa \in [0,1)$ and $c_t$ converges to 0 with probability 1;
	
\item  $\Var[F_t(x_t) | P_t] \leq K(1+\kappa \| \Delta_t\|)^{2}$, where $K$ is some constant. 

\end{enumerate}
	Then $\Delta_t$ converges to 0 with probability 1. Here, $\|\cdot\|$ denotes a maximum norm.
\end{lemma}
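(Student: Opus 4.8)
The plan is to prove the lemma by splitting the driving term $F_t$ into its conditional mean and a martingale-difference noise, then showing separately that (i) the mean part is pulled toward zero by the factor $\kappa<1$ in condition~3, and (ii) the noise part is averaged away by the stepsize conditions in condition~2. Because $X$ is finite (condition~1), it suffices to control each coordinate $\Delta_t(x)$ and take the maximum at the very end. Concretely I would write $F_t(x)=\bar F_t(x)+\tilde F_t(x)$ with $\bar F_t(x)=\bE[F_t(x)\mid P_t]$ and $\tilde F_t(x)=F_t(x)-\bar F_t(x)$, so that $\bE[\tilde F_t(x)\mid P_t]=0$ and $\Var[\tilde F_t(x)\mid P_t]=\Var[F_t(x)\mid P_t]$.

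The heart of the argument is a sandwich / scaling-down estimate. Assume for the moment that $\|\Delta_t\|\le D$ for all large $t$ (the boundedness step, discussed last). Then condition~3 gives $|\bar F_t(x)|\le\kappa D+c_t$ eventually. Introduce a deterministic comparison recursion $Y_{t+1}(x)=(1-\zeta_t(x))Y_t(x)+\zeta_t(x)(\kappa D+c_t)$ started at $Y_{t_0}(x)=D$, and a noise recursion $W_{t+1}(x)=(1-\zeta_t(x))W_t(x)+\zeta_t(x)\tilde F_t(x)$ started at $W_{t_0}(x)=0$. Since $\zeta_t(x)\in[0,1]$ keeps the recursion monotone in its previous value and driving term, a one-line induction yields $|\Delta_t(x)|\le Y_t(x)+|W_t(x)|$ for $t\ge t_0$. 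I would then invoke two auxiliary facts: (a) $Y_t(x)\to\kappa D$, and (b) $W_t(x)\to 0$ almost surely. Together they give $\limsup_t\|\Delta_t\|\le\kappa D$; feeding this sharper bound back in (with $D$ replaced by a value just above $\kappa D$) and iterating gives $\limsup_t\|\Delta_t\|\le\kappa^n D\to 0$, i.e. $\Delta_t\to 0$ with probability $1$.

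Fact (a) is elementary: solving the linear recursion gives $Y_t(x)-\kappa D=\prod_s(1-\zeta_s(x))(Y_{t_0}(x)-\kappa D)+\sum_s\zeta_s(x)\prod(1-\zeta)\,c_s$, where the first term vanishes because $\sum_s\zeta_s(x)=\infty$ forces $\prod_s(1-\zeta_s(x))\to 0$, and the second vanishes by a Toeplitz/Ces\`aro argument since $c_s\to 0$. Fact (b) is the probabilistic core: using $\bE[\tilde F_t(x)\mid P_t]=0$, the variance bound of condition~4 (finite once $\|\Delta_t\|\le D$, say $\Var[F_t(x)\mid P_t]\le K'=K(1+\kappa D)^2$), and $(1-\zeta)^2\le 1-\zeta$, one obtains $\bE[W_{t+1}(x)^2\mid P_t]\le(1-\zeta_t(x))W_t(x)^2+\zeta_t(x)^2K'$. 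Since $\sum_t\zeta_t(x)^2<\infty$, the Robbins--Siegmund supermartingale lemma gives that $W_t(x)^2$ converges almost surely and $\sum_t\zeta_t(x)W_t(x)^2<\infty$; as $\sum_t\zeta_t(x)=\infty$, the limit is forced to be $0$.

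The main obstacle is the boundedness step assumed above, namely that $\|\Delta_t\|$ is almost surely bounded. This is delicate precisely because condition~4 allows the noise standard deviation to grow linearly in $\|\Delta_t\|$, so a priori the iterate might escape to infinity before the contraction can act. I would handle it by a stability/scaling argument in the spirit of \cite{bertsekas1995dynamic}: compare $\|\Delta_t\|$ against an increasing sequence of thresholds and show, via the same mean-contraction-plus-square-summable-noise estimate after rescaling by the running maximum, that the coefficient $\kappa<1$ dominates the at-most-linear noise growth and prevents blow-up. This stability analysis, rather than the contraction bookkeeping, is where the real work lies.
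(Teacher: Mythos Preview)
The paper does not supply its own proof of this lemma; it simply states that ``the proof of this lemma can be found in \cite{singh2000convergence}, building on earlier work in \cite{jaakkola1993convergence, bertsekas1995dynamic}.'' Your proposal is correct and is precisely the classical argument developed in those references: the mean/martingale-difference decomposition of $F_t$, the deterministic comparison sequence $Y_t$ absorbing the $\kappa\|\Delta_t\|+c_t$ bound, the Robbins--Siegmund-type supermartingale control of the noise process $W_t$, and the geometric iteration $\limsup\|\Delta_t\|\le\kappa D\Rightarrow\cdots\Rightarrow 0$ are exactly the steps in \cite{singh2000convergence} and \cite{jaakkola1993convergence}. You also correctly identify that the a priori boundedness of $\|\Delta_t\|$ under linearly growing noise is the genuinely nontrivial part, handled by the rescaling/stability analysis of \cite{bertsekas1995dynamic}. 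So there is nothing to compare against in the paper itself; your sketch reproduces the cited proof.
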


The main result is stated in the following theorem. 
 
\begin{theorem}\label{theorem123}
The double Q-values, $Q^A$ and $Q^B$, updated using \eqref{policy2}, either through random or simultaneous updates, converge to the same optimal $Q^*$ with probability 1 if the following conditions hold:

\begin{enumerate}
\item Each state action pair is sampled an infinite number of times;
	
\item The MDP is finite;
	
\item $\gamma \in [0, 1)$;
	
\item Q values are stored in a lookup table;
	
\item Both $Q^A$ and $Q^B$ receive an infinite number of updates;
	
\item The learning rates satisfy $\alpha_t(s, a) \in[0,1], \sum_t \alpha_t(s, a)=\infty, \sum_t(\alpha_t(s, a))^{2}<\infty$ with probability 1 and $\alpha_t(s,a)=0, \forall(s, a) \neq(s_t, a_t)$;
	
\item $\Var[R(s, a)]<\infty, \forall s, a$;

\item The TD errors $E_t^A$ and $E_t^B$ are bounded.

\end{enumerate}
 
\end{theorem}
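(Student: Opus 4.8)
The plan is to apply Lemma~\ref{lemma} twice in succession, following the classical double Q-learning argument of \cite{hasselt2010double} but adapted to the coupled update \eqref{policy2} in which both $Q^A$ and $Q^B$ share the same TD target $T_t$. First I would set up the error process. Define $\Delta_t^{BA}(s,a) = Q_t^B(s,a) - Q_t^A(s,a)$ and show it converges to $0$ with probability $1$; this is where the shared target really helps. Subtracting the two update equations in \eqref{policy2}, the $T_t$ terms cancel on the update step, so on the subset of times $(s_t,a_t)=(s,a)$ where both are updated (or, in the random pattern, on each update with the bookkeeping of \cite{hasselt2010double}), $\Delta_{t+1}^{BA}$ satisfies a recursion of the form \eqref{policy3} with $F_t \equiv 0$ up to the asymmetry introduced by random updating. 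I would cast this in the framework of Lemma~\ref{lemma} with $x_t = (s_t,a_t)$, $\zeta_t = \alpha_t$ (appropriately thinned for the random pattern), $\kappa$ arbitrarily small, and $c_t \to 0$, using condition~6 for the Robbins--Monro step-size requirements and condition~8 to keep the iterates bounded so the variance bound in item~4 of the lemma holds. The conclusion is $\|Q_t^B - Q_t^A\|_\infty \to 0$ a.s.

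Second, with $\Delta_t^{BA}\to 0$ in hand, I would analyze convergence of $\Delta_t^A(s,a) = Q_t^A(s,a) - Q^*(s,a)$ to $0$, and symmetrically for $\Delta_t^B$. Writing the $Q^A$-update in the form \eqref{policy3} with $\zeta_t = \alpha_t$ and $x_t=(s_t,a_t)$, the driving term is $F_t(s_t,a_t) = r_t + \gamma\min\{Q_t^B(s_{t+1},a^*_{i}),\, Q_t^A(s_{t+1},a^*_{i})\} - Q^*(s_t,a_t)$. I would decompose $F_t = F_t^{\mathrm{Q}} + \gamma\, c_t'$ where $F_t^{\mathrm{Q}}$ is the ordinary $Q$-learning increment $r_t + \gamma \max_a Q_t^A(s_{t+1},a) - Q^*(s_t,a_t)$ and $c_t'$ collects the gap between $\min\{Q_t^B,Q_t^A\}$ evaluated at the selected $a^*_i$ and $\max_a Q_t^A(s_{t+1},a)$. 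The key observations are: (i) $\bE[F_t^{\mathrm{Q}}\mid P_t] = (\mathcal{T}Q_t^A - Q^*)(s_t,a_t)$ where $\mathcal{T}$ is the optimal Bellman operator, so by its $\gamma$-contraction $\|\bE[F_t^{\mathrm{Q}}\mid P_t]\|_\infty \le \gamma\|\Delta_t^A\|_\infty$, giving item~3 of the lemma with $\kappa=\gamma<1$ by condition~3; and (ii) $|c_t'|$ is controlled by $\|Q_t^B - Q_t^A\|_\infty = \|\Delta_t^{BA}\|_\infty$ plus terms that vanish because the $\min$ over $\{Q^A,Q^B\}$ at $a^*_i$ lies between $\min_a$ and $\max_a$ of the two functions and both functions coalesce, so $c_t' \to 0$ a.s. by Step~1. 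Hence $c_t := \gamma c_t'$ is an admissible vanishing term, item~3 holds, and item~4 (bounded conditional variance) follows from condition~7 on $\Var[R]$ together with the boundedness from condition~8. Lemma~\ref{lemma} then yields $\Delta_t^A \to 0$, i.e. $Q_t^A \to Q^*$ a.s.; the same argument gives $Q_t^B\to Q^*$, and since they both converge to $Q^*$ they converge to the \emph{same} optimal value, which also re-confirms Step~1.

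For the two updating patterns I would handle them uniformly: in the simultaneous pattern both recursions are driven at every visit to $(s_t,a_t)$, so the step-size conditions transfer directly from condition~6; in the random pattern, each of $Q^A$ and $Q^B$ is updated on an i.i.d.\ coin flip, so with probability $1$ each receives infinitely many updates (condition~5) and the effective step sizes $\tilde\alpha_t = \mathbf{1}\{\text{$A$ chosen}\}\alpha_t$ still satisfy $\sum_t \tilde\alpha_t = \infty$, $\sum_t \tilde\alpha_t^2 < \infty$ a.s.\ by a Borel--Cantelli / strong-law argument, so Lemma~\ref{lemma} applies verbatim with $\zeta_t = \tilde\alpha_t$. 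The extra factor of $1/2$ in the expected update is absorbed into the $\zeta_t$ sequence and does not affect the contraction constant.

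The main obstacle I anticipate is Step~1: rigorously showing $c_t'$ in Step~2 is dominated by something vanishing requires knowing that the two critics become equal, and the cleanest route to that is the shared-target cancellation — but one must be careful that in the random updating pattern the two critics are updated on \emph{disjoint} time steps, so the naive subtraction $Q_{t+1}^B - Q_{t+1}^A$ no longer has the clean telescoping form it has under simultaneous updates. Making the $\Delta_t^{BA}\to 0$ argument airtight under random updates (e.g., by treating the pair as a single process on an augmented index set, or by bounding the mismatch accumulated between consecutive updates of the same critic using the square-summability of the step sizes and the boundedness of the TD errors from condition~8) is the delicate point; everything downstream is then a routine instantiation of Lemma~\ref{lemma} with $\kappa = \gamma$.
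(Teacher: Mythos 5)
Your proposal follows essentially the same two-step route as the paper's proof: first show $\Delta_t^{BA}=Q_t^B-Q_t^A\to 0$ via Lemma~\ref{lemma} using the shared TD target, then show $Q_t^A\to Q^*$ by decomposing the driving term into the standard Q-learning increment $F_t^Q$ plus a correction $\gamma c_t$ that vanishes by Step 1. The one detail you flag as delicate --- Step 1 under random updating --- is closed in the paper not by thinning the step sizes or augmenting the index set, but by writing the two possible one-sided updates as a single recursion with step size $\tfrac12\alpha_t(s_t,a_t)$ and driving term $F_t^{BA}$ equal to $-2E_t^A+\Delta_t^{BA}$ or $2E_t^B+\Delta_t^{BA}$ with probability $\tfrac12$ each; since $E_t^B-E_t^A=-\Delta_t^{BA}$ (shared target), $\bE[F_t^{BA}\mid P_t]=0$, so condition 3) of Lemma~\ref{lemma} holds with $\kappa=0$ and $c_t=0$, and no telescoping across the disjoint update times of the two critics is required. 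A minor bookkeeping difference in Step 2: the paper keeps $\zeta_t=\alpha_t$ and lets $F_t=\Delta_t$ on the non-update steps, yielding $\kappa=(1+\gamma)/2<1$ rather than your $\kappa=\gamma$ with thinned steps; both suffice.
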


\begin{proof} We begin by providing the proof for the case of random updating, which is divided into two steps. In the first step, let $\Delta_t^{BA}(s_t,a_t)=Q_t^B(s_t,a_t) - Q_t^A(s_t,a_t)$. Under the random updating pattern, either $Q_t^A(s_t, a_t)$ or $Q_t^B(s_t, a_t)$ is updated according to \eqref{policy2}, each with a probability of $0.5$. Specifically, when updating $Q_t^A(s_t,a_t)$, we have
\begin{align} \label{policy5a}
    &\Delta_{t+1}^{B A}(s_t,a_t) 
    \nonumber \\=&Q_{t+1}^B(s_t,a_t)-Q_{t+1}^A(s_t,a_t)
    \nonumber \\=&Q_t^B(s_t,a_t)-Q_{t+1}^A(s_t,a_t)
    \nonumber \\=&Q_t^B(s_t,a_t)-Q_t^A(s_t,a_t) - \alpha_t(s_t,a_t)E_t^A
    \nonumber \\=&\Delta_t^{B A}(s_t,a_t)-\alpha_t(s_t, a_t) E_t^A. 
\end{align}
Similarly, when updating $Q_t^B(s_t,a_t)$, we have
\begin{equation} \label{policy5b}
\Delta_{t+1}^{B A}(s_t,a_t) =\Delta_t^{B A}(s_t,a_t)+\alpha_t(s_t, a_t) E_t^B.
\end{equation}
Denote
\begin{align*}
    F_t^{BA1}(s_t,a_t) =-2E_t^A+\Delta_t^{B A}(s_t,a_t)
\end{align*}
and
\begin{align*}
    F_t^{BA2}(s_t,a_t) =2E_t^B+\Delta_t^{B A}(s_t,a_t).
\end{align*}

Equations \eqref{policy5a} and \eqref{policy5b} can be expressed as
\begin{align} 
\Delta_{t+1}^{B A}(s_t,a_t) =\left(1- \frac{1}{2}\alpha_t(s_t,a_t)\right) \Delta_t^{B A}(s_t,a_t) \nonumber \\
+\frac{1}{2} \alpha_t(s_t, a_t) F_t^{BA}(s_t,a_t) \label{policy5-1}
\end{align}
where $F_t^{BA}(s_t,a_t)$ takes the value of $F_t^{BA1}(s_t,a_t)$ or $F_t^{BA2}(s_t,a_t)$ with probability $0.5$.

We apply Lemma~\ref{lemma} to the process \eqref{policy5-1}, where $x_t = (s_t, a_t)$, $\Delta_{t}(x_t) = \Delta_{t}^{BA}(s_t, a_t)$, $\zeta_t(x_t) = \frac{1}{2} \alpha_t(s_t, a_t)$, and $F_t(x_t) = F_t^{BA}(s_t, a_t)$. Let $P_t = \{Q_0^A, Q_0^B, s_0, \dots, s_t, a_0, \dots, a_t, r_0, \dots, r_{t-1} \}$. Conditions 1) and 2) are clearly satisfied. Next, we verify that $\bE[F_t^{BA}(s_t, a_t)]$ and $\Var[F_t^{BA}(s_t, a_t)]$ meet conditions 3) and 4), where the expectation and variance are calculated for the random variable given $P_t$.

Using \eqref{EABC}, we have
\begin{align*}
    & \bE[F_t^{BA}(s_t,a_t)] \nonumber\\ 
    = & \frac{1}{2} \left[ \bE[F_t^{BA1}(s_t,a_t)] + \bE[F_t^{BA2}(s_t,a_t)] \right] \nonumber\\
    = & \bE[(T_t - Q_t^B(s_t,a_t))-(T_t - Q_t^A(s_t,a_t)) \nonumber\\
    & + \Delta_t^{B A}(s_t,a_t)] = 0.
\end{align*} 
This verifies condition 3). It also implies $\bE[F_t^{BA1}(s_t,a_t)]=-\bE[F_t^{BA2}(s_t,a_t)]$. 
Next, using \eqref{varABC}, we have
\begin{align*}
    & \Var[F_t^{BA}(s_t,a_t)] \nonumber \\ 
    = & \frac{1}{2}\Var[F_t^{BA2}(s_t,a_t)] + \frac{1}{2}\Var[F_t^{BA1}(s_t,a_t)] \nonumber \\
    & + \frac{1}{4}(\bE[F_t^{BA2}(s_t,a_t)]   - \bE[F_t^{BA1}(s_t,a_t)])^2 \nonumber \\
    = & \frac{1}{2} \bigg[ \bE[(F_t^{BA2}(s_t,a_t))^2]  - (\bE[F_t^{BA2}(s_t,a_t)])^2 \bigg] \nonumber \\
    & +\frac{1}{2} \bigg[ \bE[(F_t^{BA1}(s_t,a_t))^2]  - (\bE[F_t^{BA1}(s_t,a_t)])^2 \bigg] \nonumber \\
    & +  (\bE[F_t^{BA2}(s_t,a_t)])^2 \nonumber \\
    = & \frac{1}{2} \bigg[ \bE[(F_t^{BA2}(s_t,a_t))^2]  + \bE[(F_t^{BA1}(s_t,a_t))^2]    \bigg]   \nonumber \\   
    = & \frac{1}{2} \bigg[ \bE[(2E_t^B+\Delta_t^{B A}(s_t,a_t))^2] \nonumber \\
    & + \bE[(-2E_t^A+\Delta_t^{B A}(s_t,a_t))^2] \bigg] \nonumber \\
    = & \bE[2(E_t^B)^2 + 2(E_t^A)^2 - (\Delta_t^{BA}(s_t,a_t))^2] \nonumber \\
    \leq &K(1 + \kappa\|\Delta_t^{BA}(s_t,a_t)\|)^2
\end{align*}
for some constants $K$ and $\kappa$. This verifies condition 4). By applying Lemma~\ref{lemma}, we conclude that $\Delta_t^{BA}(s_t, a_t) = Q_t^B(s_t, a_t) - Q_t^A(s_t, a_t)$ converges to 0 with probability 1, completing the proof of the first step.

\medskip

In the second step, let $\Delta_t=Q_t^A-Q^*$. Then, using \eqref{policy2}, when updating $Q_t^A(s_t,a_t)$, we have
\begin{align} \label{DeltaA1}
& \Delta_{t+1}(s_t,a_t)= Q_{t+1}^A(s_t,a_t)-Q^*(s_t,a_t). 
\end{align} 
Similarly, when updating $Q_t^B(s_t,a_t)$, we have
\begin{align} \label{DeltaA2}
& \Delta_{t+1}(s_t,a_t)= \Delta_t(s_t,a_t).  
\end{align} 
Equations \eqref{DeltaA1} and \eqref{DeltaA2} can be rewritten as
\begin{align} \label{DeltaA}
 \Delta_{t+1}(s_t,a_t) = & (1-\alpha_t(s_t,a_t))\Delta_t(s_t,a_t) \nonumber\\
 & +\alpha_t(s_t,a_t) F_t(s_t,a_t)
\end{align} 
where
\begin{align}
    F_t(s_t,a_t)=&T_t -Q_t^*(s_t, a_t) \label{Ft1}
\end{align}
or 
\begin{align}
    F_t(s_t,a_t)= \Delta_t(s_t,a_t), \label{Ft2}
\end{align}
each with probability $0.5$.

We apply Lemma~\ref{lemma} to the process described by \eqref{DeltaA}, with $x_t = (s_t, a_t)$ and $\zeta_t(x_t) = \alpha_t(s_t, a_t)$. As in the first step, we need to verify conditions 3) and 4).

Let 
\begin{align*}
F_t^Q (s_t,a_t) = r_t + \gamma Q_t^A(s_{t+1}, a_1^*) - Q_t^*(s_t, a_t)
\end{align*}
denote the TD error under standard Q-learning.
It is well-known that $\bE[F_t^Q(s_t,a_t)] \leq \gamma\|\Delta_t\|$ \cite{jaakkola1993convergence}.
Additionally, let
\begin{align*}
    c_t =& \min\{Q_t^B(s_{t+1},a_{i=1,2}^*), Q_t^A(s_{t+1}, a_{i=1,2}^*)\} \\& - Q_t^A(s_{t+1}, a_1^*)
\end{align*}
denote a term that converges to 0 with probability 1, as established in the first step.

Using \eqref{EABC}, we have \begin{align*}
& \bE[F_t(s_t,a_t)] \nonumber\\ 
= & \frac{1}{2} \bigg[ \bE[T_t -Q_t^*(s_t, a_t)] + \bE[\Delta_t(s_t,a_t)] \bigg] \nonumber\\
= & \frac{1}{2} \bigg[ \bE[F_t^Q(s_t, a_t) + \gamma c_t]   + \bE[\Delta_t(s_t,a_t)] \bigg] \\
\leq & \frac{1}{2}(\gamma+1) \|\Delta_t\| +\frac{\gamma}{2}c_t.
\end{align*} 
This verifies condition 3).
Next, using \eqref{varABC}, we have
\begin{align*}
    & \Var[F_t(s_t,a_t)] \nonumber \\ 
    = & \frac{1}{2}\Var[F_t^Q(s_t,a_t) + \gamma c_t]  + \frac{1}{2}\Var[\Delta_t(s_t,a_t)] \nonumber \\
    & + \frac{1}{4}(\bE[F_t^Q(s_t,a_t) + \gamma c_t]  - \bE[\Delta_t(s_t,a_t)])^2 \nonumber \\
 \leq &   \Var[F_t^Q(s_t,a_t)]  +   \Var[\gamma c_t]  \\
  & + \frac{1}{4}\left( (\gamma +1) \|\Delta_t\| + \bE[\gamma c_t]\right)^2. 
\end{align*}
Note that $\Var[\Delta_t(s_t,a_t)]=0$.
It is well-known that $\Var[F_t^Q(s_t,a_t)] \leq C(1+\kappa \| \Delta_t\|)^{2}$ \cite{jaakkola1993convergence}. Additionally, $\gamma c_t$ converges to 0 with probability 1, as established in the first step. This verifies condition 4). By applying Lemma~\ref{lemma}, we conclude that $\Delta_t(s_t, a_t)$ converges to 0 with probability 1, completing the proof of the second step.

The above two steps together prove that $Q^A$ and $Q^B$, updated using \eqref{policy2} with random updating, converge to the same optimal $Q^*$ with probability 1.

The proof for simultaneous updating is similar, with some key differences explained below. In the first step, we have
\begin{align*} 
    & \Delta_{t+1}^{B A}(s_t, a_t) \nonumber \\
    = & Q_{t+1}^B(s_t, a_t)-Q_{t+1}^A(s_t, a_t) \nonumber \\
    = & Q_t^B(s_t, a_t) + \alpha_t(s_t, a_t)E_t^B  - Q_t^A(s_t, a_t)-\alpha_t(s_t, a_t)E_t^A \nonumber \\
    = & (1-\alpha_t(s_t, a_t)) \Delta_t^{B A}(s_t, a_t) 
\end{align*}
which resembles the form of \eqref{policy5-1}, but with $(1/2) \alpha_t(s_t, a_t)$ replaced by $\alpha_t(s_t, a_t)$ and $F_t^{BA}(s_t, a_t) = 0$. The remaining proof follows straightforwardly.

In the second step, \eqref{DeltaA} applies where $F_t(s_t, a_t)$ always takes the value given in \eqref{Ft1}, rather than \eqref{Ft2}. Consequently, the calculations for $\bE[F_t(s_t, a_t)]$ and $\Var[F_t(s_t, a_t)]$ are slightly modified as follows: 
  \begin{align*}
& \bE[F_t(s_t,a_t)] \nonumber\\ 
= &  \bE[T_t -Q_t^*(s_t, a_t)] =  \bE[F_t^Q(s_t, a_t) + \gamma c_t]  \\
\leq &\gamma \|\Delta_t\| +\gamma c_t.
\end{align*} 
 and 
 \begin{align*} 
    & \Var[F_t(s_t,a_t)]  = \Var[F_t^Q(s_t,a_t) + \gamma c_t]  \\
 \leq &  2 \Var[F_t^Q(s_t,a_t)]  + 2  \Var[\gamma c_t].  
\end{align*}
Therefore, conditions 3) and 4) are also verified, and the remaining proof follows accordingly.  \end{proof}

\section{Experiments}
\label{sec:experiments}

We conducted extensive experiments on nine continuous control tasks using MuJoCo and Box2d environments from OpenAI Gym \cite{brockmanOpenAIGym2016, todorovMuJoCoPhysicsEngine2012}. Table~\ref{table.tasks} lists the state and action dimensions of these environments. Our benchmark algorithms include DDPG \cite{lillicrapContinuousControlDeep2019}, TD3 \cite{fujimotoAddressingFunctionApproximation}, DARC \cite{lyuEfficientContinuousControl2022}, SD3 \cite{panSoftmaxDeepDouble}, and GD3 \cite{lyuValueActivationBias2023}.

\begin{table}[h]  
  \caption{State and action dimensions of nine control tasks\label{tab:table2}}
  \centering
  \small
  \begin{tabular}{cccccc}
    \toprule
    Environment & State & Action \\
    \midrule
    Ant-v2 & 111 & 8 \\
    HalfCheetah-v2 & 17 & 6 \\
    Hopper-v2 & 11 & 3 \\
    Walker2d-v2 & 17 & 6 \\
    Reacher-v2 & 11 & 2 \\
    InvertedDoublePendulum-v2 & 11 & 1 \\
    InvertedPendulum-v2 & 4 & 1 \\
    BipedalWalker-v3 & 24 & 4 \\
    LunarLanderContinuous-v2 & 8 & 2 \\     
    \bottomrule
  \end{tabular}
  \label{table.tasks}
\end{table}

Given the emphasis on reproducibility \cite{hendersonDeepReinforcementLearning2018, nagarajanDeterministicImplementationsReproducibility2019} and the importance of effective comparisons \cite{colasHowManyRandom2018}, we conducted our experiments using five random seeds and network initializations to ensure fairness in our comparisons. Additionally, we utilized the default reward function and environment settings without modification to maintain a level playing field. Each of the nine selected environments was run for 1 million steps, with evaluations conducted every 5,000 steps using the average reward from 10 episodes for each evaluation.  The batch size is set to 128.

The experiment is structured into three parts. The first part compares TDDR with DDPG and TD3, which have different AC architectures. The second part compares TDDR with DARC, SD3, and GD3, all of which share the same DAC architecture. The third part consists of ablation discussion about the impact of different components of TDDR.

In the second part, since TDDR does not introduce additional hyperparameters unlike DARC, SD3, and GD3, the performance comparison is divided into two scenarios. The first scenario involves configuring the three benchmark algorithms with hyperparameters that yield relatively better performance, while the second scenario involves setting them with hyperparameters that result in relatively poorer performance. The values of the hyperparameters are listed in Table~\ref{table.hyperpara}. This aims to validate the impact of hyperparameters on performance metrics such as standard deviation. It should be noted that, for a fair comparison, the hyperparameters of DARC, SD3, and GD3 were selected based on the results of ablation experiments from their original papers. Here, ``additional hyperparameters'' refers to those beyond those of TD3. 

\begin{table}[h!] 
  \caption{Hyperparameters setup\label{tab:table4}}
  \centering
  \small
  \begin{tabular}{cc}
    \toprule
    Hyperparameter & Value (better / worse) \\
    \midrule
    \textbf{DARC} \\
    regularization parameter $\lambda$ & 0.005 \textbf{/} 0.1 \\
    weighting coefficient $\nu$ & 0.1 \textbf{/} 0.5 \\
    \textbf{SD3} \\
    NNS & 50 \textbf{/} 2 \\
    parameter $\beta$ & 0.001 \textbf{/} 0.5 \\
    \textbf{GD3} \\
    NNS & 50 \textbf{/} 2 \\
    parameter $\beta$ & 0.05 \textbf{/} 0.5 \\
    bias term $b$ & 2 \textbf{/} 2 \\
    activation functions & 
    environment dependent\\
   \bottomrule
  \end{tabular}
  \label{table.hyperpara}
\end{table}

\subsection{Comparison with DDPG and TD3}

The experimental results are plotted in Fig.~\ref{fig:baseline}, where the solid curves depict the mean across evaluations and the shaded region represents one standard deviation over five runs. A sliding window of five was applied to achieve smoother curves.
This same representation is also used in Figs.~\ref{fig:better} and \ref{fig:worse}.

Table~\ref{table:baseline} displays the average returns from five random seeds for the last ten evaluations, with the maximum value for each task highlighted in bold. This formatting is also applied in Tables~\ref{table.better} and \ref{table.worse}. 

The comparison here is straightforward. Both Fig.~\ref{fig:baseline} and Table~\ref{table:baseline} clearly demonstrate that TDDR outperforms DDPG and TD3 across all environments.

\begin{figure*}
  \centering
      \subcaptionbox{}{\includegraphics[width = 0.32\textwidth]{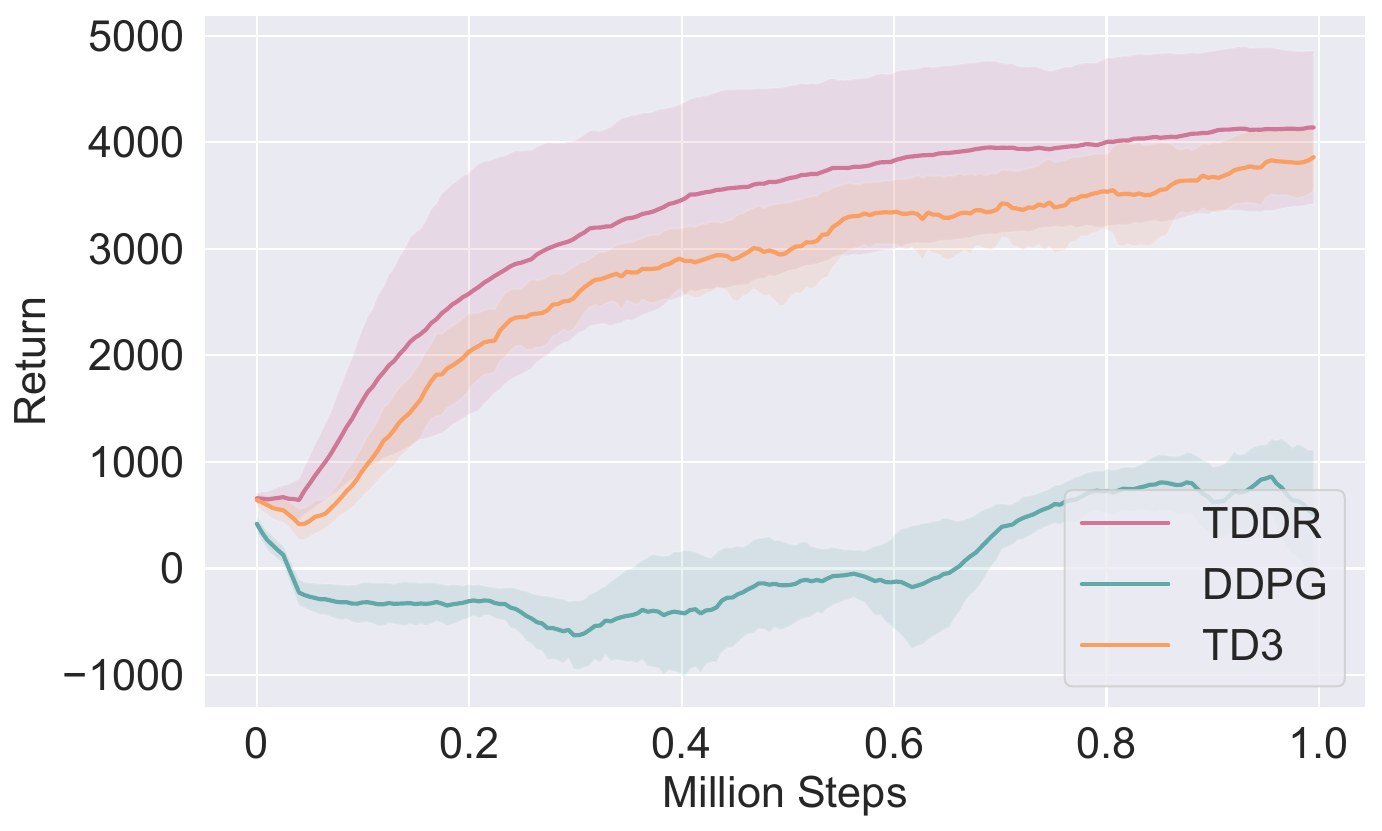}}
      \hfill
      \subcaptionbox{}{\includegraphics[width = 0.32\textwidth]{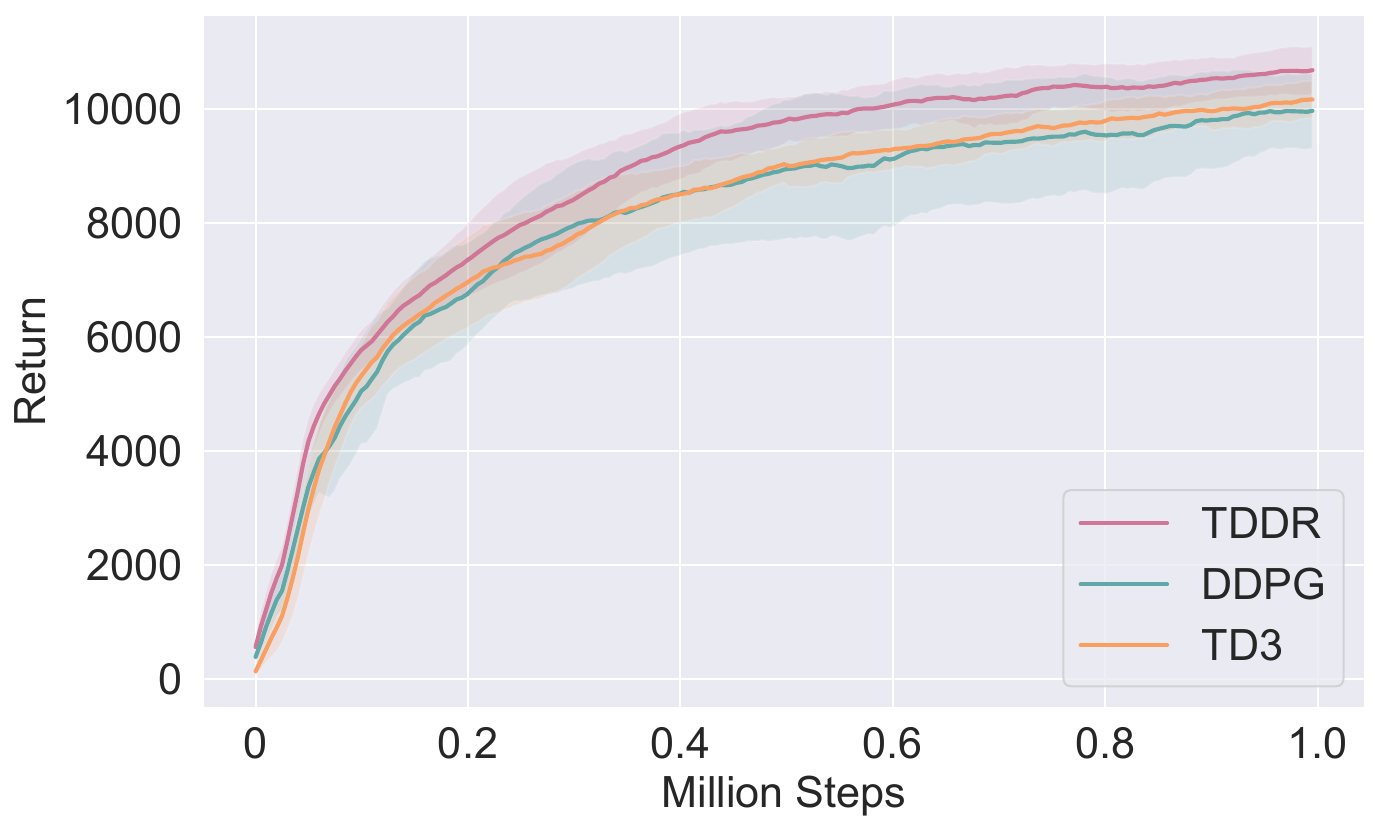}}
      \hfill
      \subcaptionbox{}{\includegraphics[width = 0.32\textwidth]{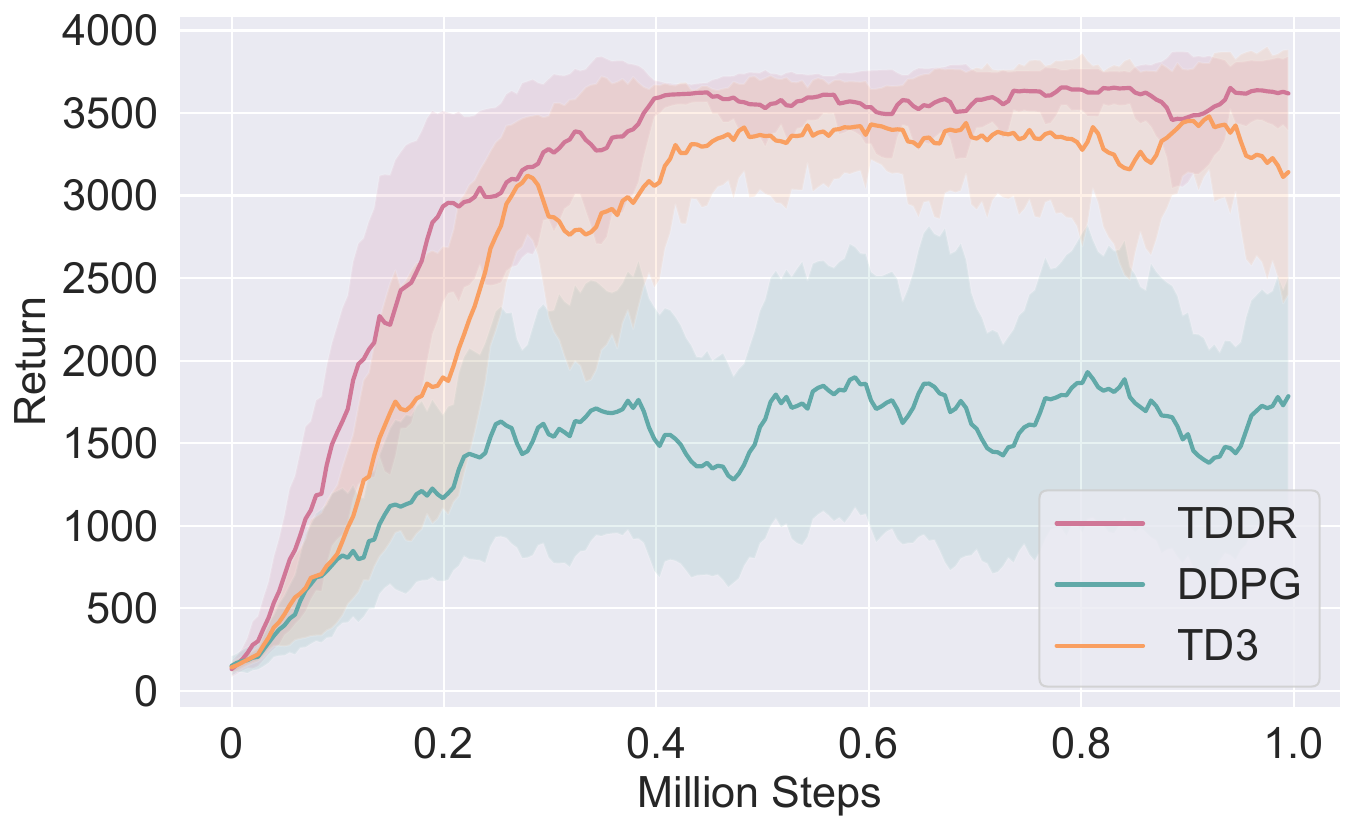}}
      \hfill
      \subcaptionbox{}{\includegraphics[width = 0.32\textwidth]{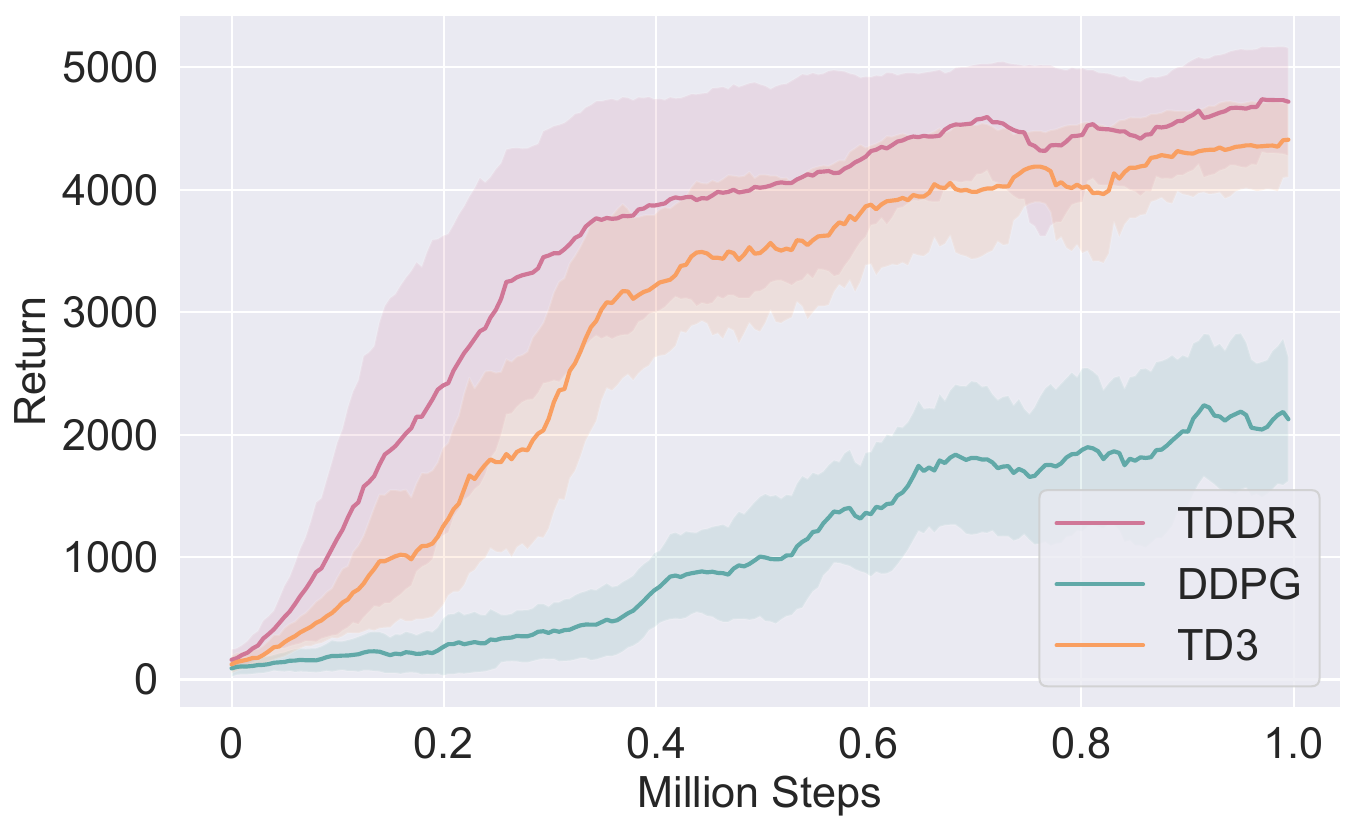}}
      \hfill
      \subcaptionbox{}{\includegraphics[width = 0.32\textwidth]{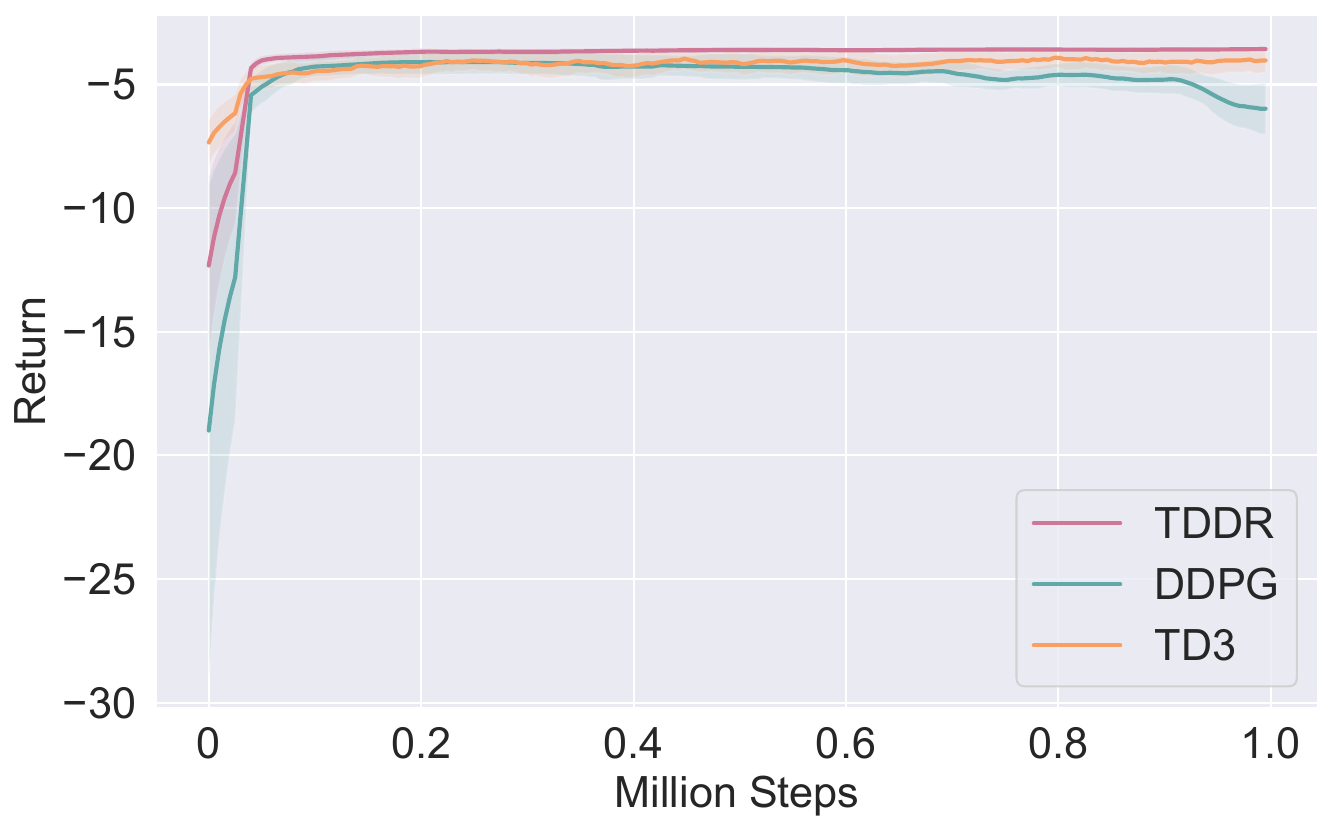}}
      \hfill
      \subcaptionbox{}{\includegraphics[width = 0.32\textwidth]{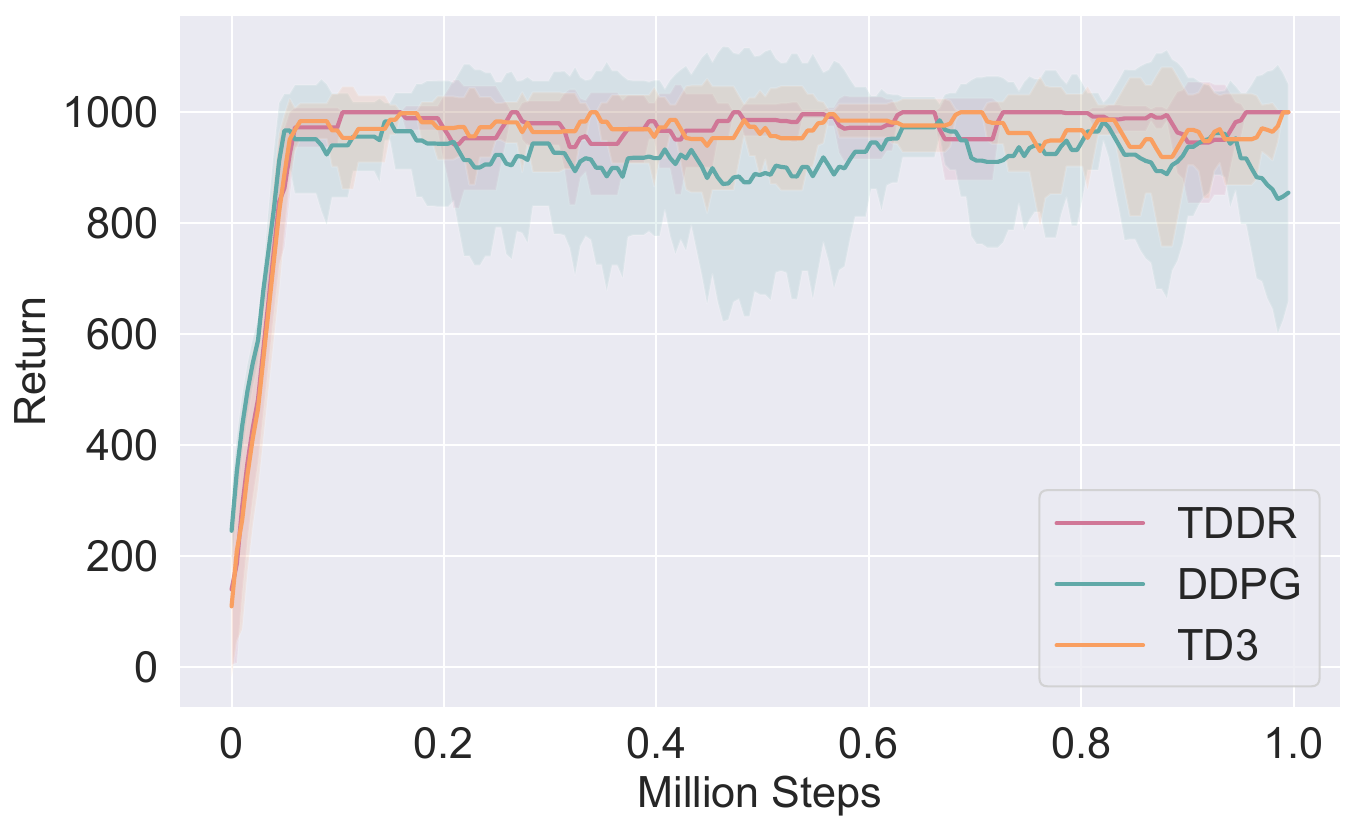}}
      \hfill
      \subcaptionbox{}{\includegraphics[width = 0.32\textwidth]{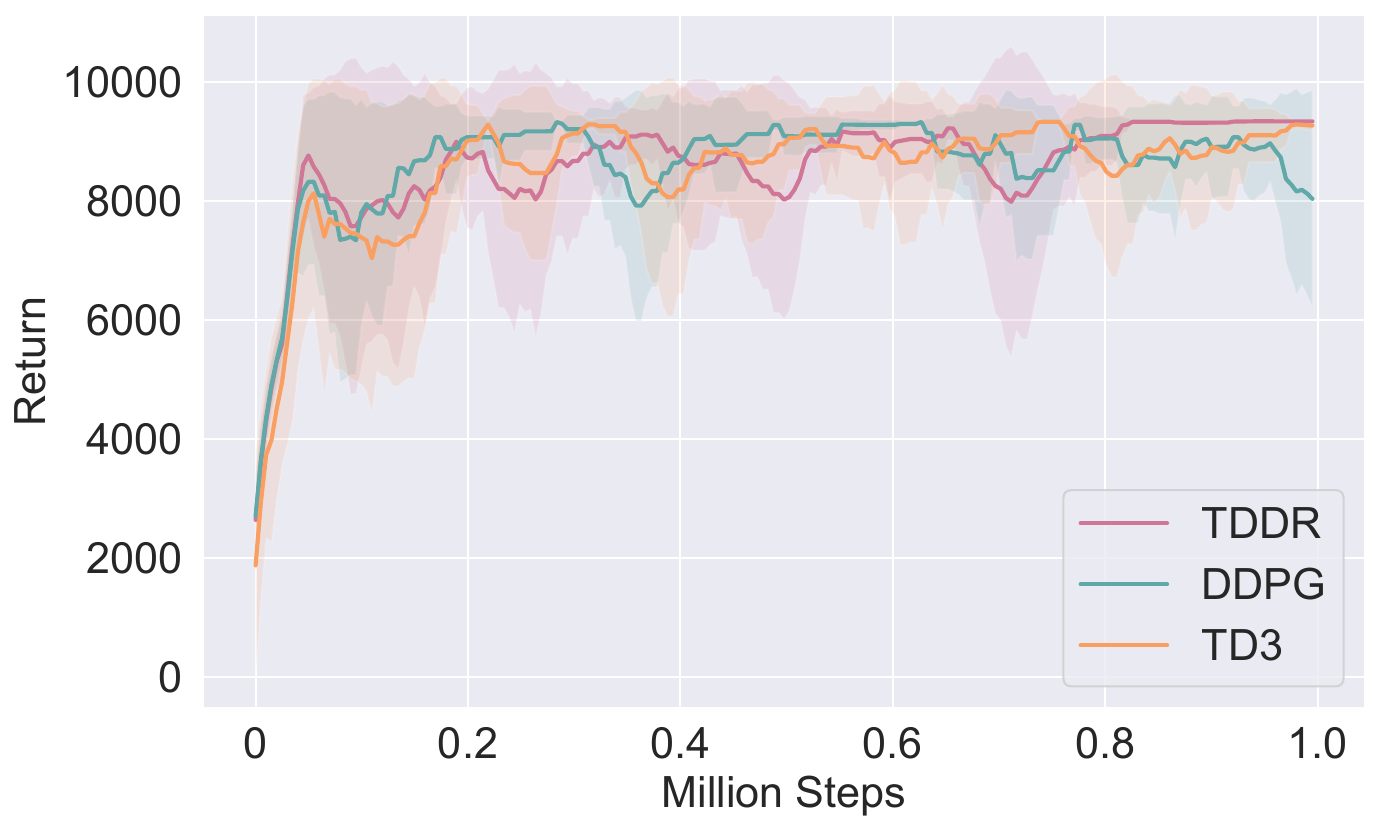}}
      \hfill
      \subcaptionbox{}{\includegraphics[width = 0.32\textwidth]{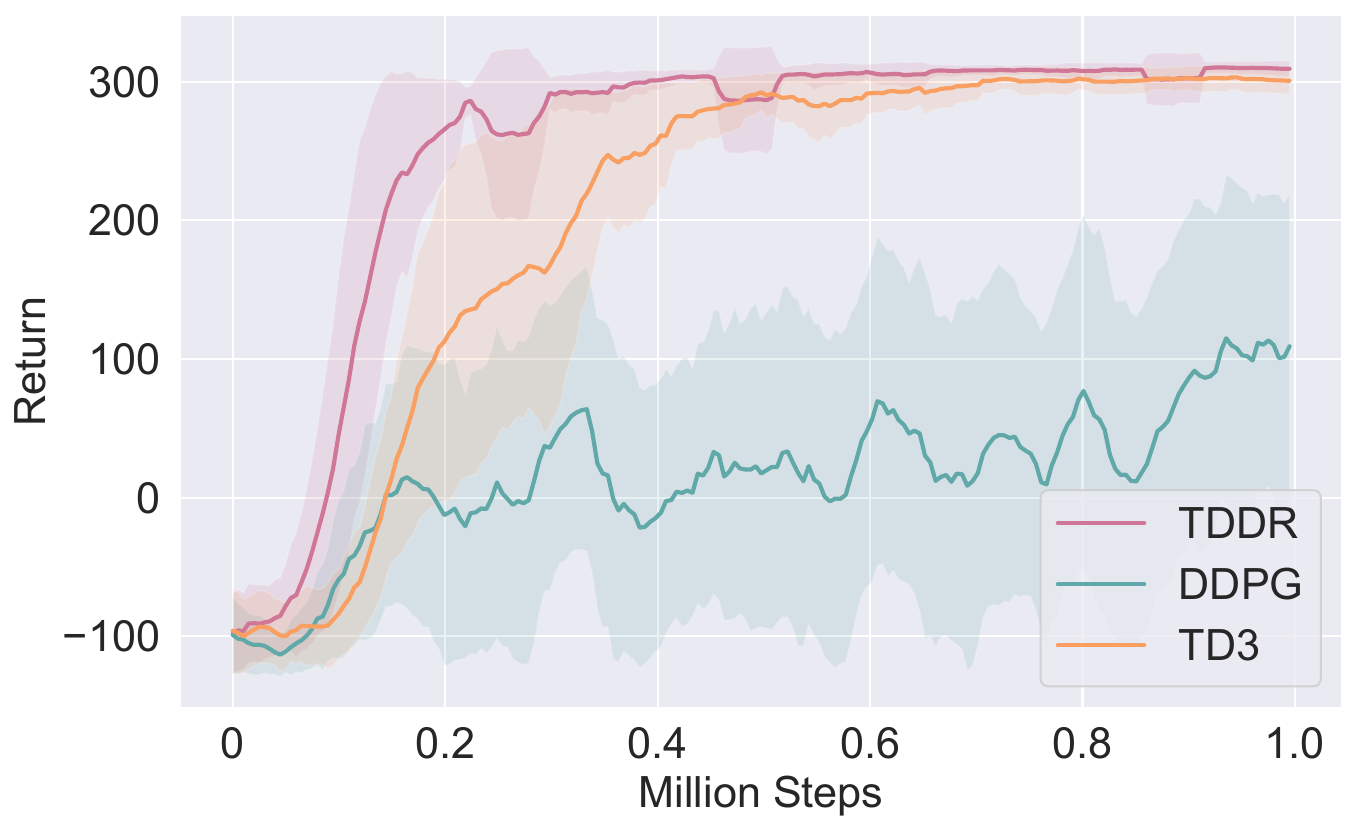}}
      \hfill
      \subcaptionbox{}{\includegraphics[width = 0.32\textwidth]{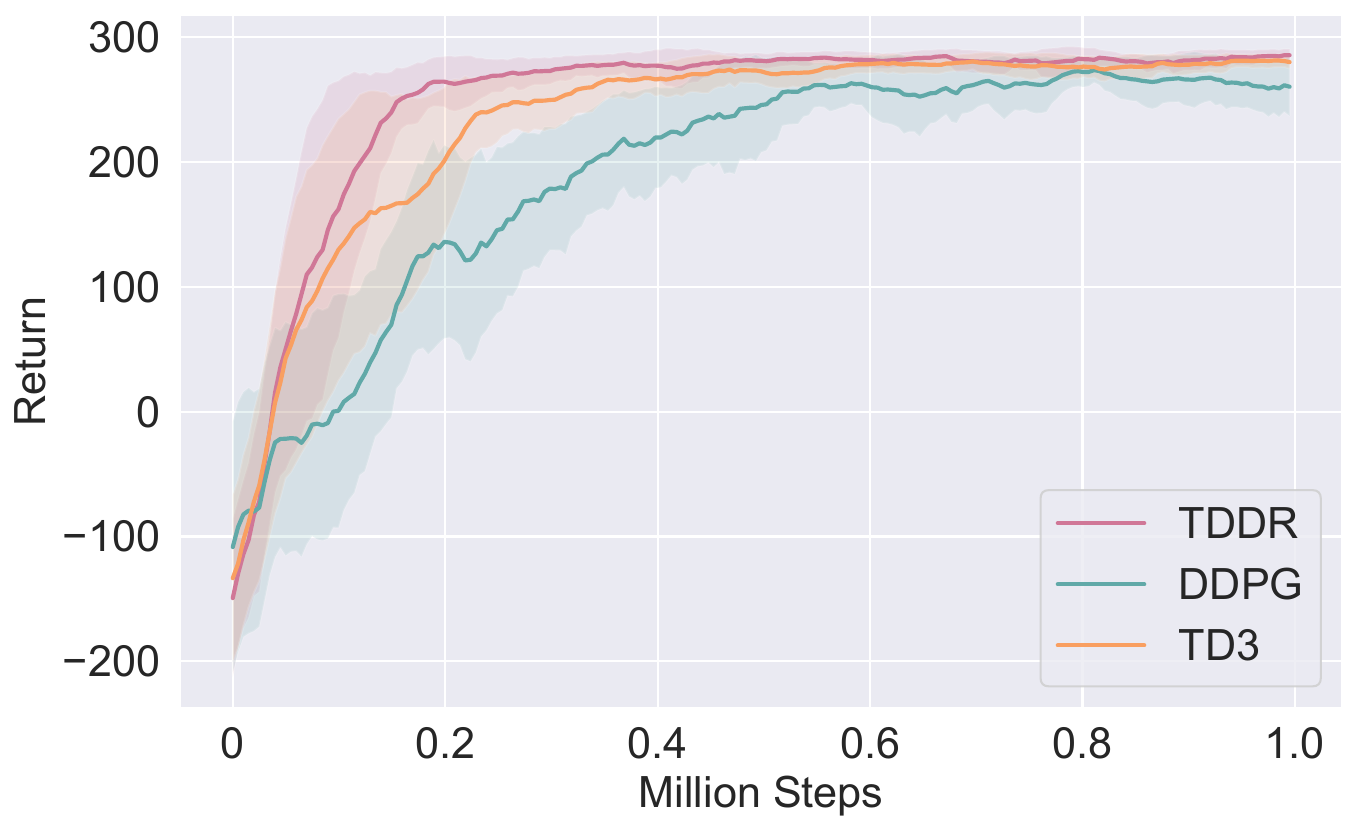}}
  \caption{Comparison of TDDR with DDPG and TD3 across nine environments. (a) Ant-v2, (b) HalfCheetah-v2, (c) Hopper-v2, (d) Walker2d-v2, (e) Reacher-v2, (f) InvertedPendulum-v2, (g) InvertedDoublePendulum-v2, (h) BipedalWalker-v3, (i) LunarLanderContinuous-v2.}
  \label{fig:baseline}
  \end{figure*}

\begin{table*}[!t] 
    \caption{Average return and standard deviation for Fig.~\ref{fig:baseline}}
      \centering
      \small
      \begin{tabular}{cccccc}
        \toprule
        Algorithms & TDDR & DDPG & TD3 \\
        \midrule
        Ant-v2 & $\bm{4128.64 \pm 726.16}$ & 640.67 $\pm$ 206.84 & 3811.08 $\pm$ 212.79  \\
        HalfCheetah-v2 & $\bm{10666.23 \pm 391.58}$ & 9958.33 $\pm$ 568.51 &  10104.80 $\pm$ 207.58 \\
        Hopper-v2 & $\bm{3631.20 \pm 160.41}$ & 1711.64 $\pm$ 167.48 &  3197.48 $\pm$ 281.71 \\
        Walker2d-v2 & $\bm{4733.41 \pm 423.20}$ & 2063.17 $\pm$ 230.83 &  4360.12 $\pm$ 228.91 \\
        Reacher-v2 & $\bm{-3.58 \pm 0.08}$ & -5.88 $\pm$ 0.73 & -4.01 $\pm$ 0.19 \\
        InvertedPendulum-v2 & $\bm{1000.00 \pm 0.00}$ & 869.17 $\pm$ 60.03 & 968.21 $\pm$ 24.81 \\
        InvertedDoublePendulum-v2 & $\bm{9342.17 \pm 12.13}$ & 8278.83 $\pm$ 551.77 & 9276.81 $\pm$ 37.19 \\
        BipedalWalker-v3 & $\bm{309.71 \pm 4.45}$ & 113.34 $\pm$ 37.99 & 301.14 $\pm$ 6.99 \\
        LunarLanderContinuous-v2 & $\bm{285.08 \pm 1.53}$ & 258.91 $\pm$ 8.11 & 281.08 $\pm$ 2.54 \\
       \bottomrule
      \end{tabular}
        \label{table:baseline}
    \end{table*}

\subsection{Comparison with DARC, SD3, and GD3}

We have already discussed the additional hyperparameters in DARC, SD3, and GD3, and their effects on these algorithms. Two sets of hyperparameters, designated as ``better'' and ``worse'' are listed in Table~\ref{table.hyperpara}. The comparison of TDDR with these three algorithms using each set of hyperparameters is discussed below. 

Note that in GD3, the bias term is set to $b=2$ in all environments, while the activation functions are environment-dependent. Specifically, second-order polynomial functions are used for BipedalWalker, Hopper, and Walker2d, while third-order polynomial functions are used for the other six environments.

\subsubsection{DARC, SD3, and GD3 with better hyperparameters}

In this comparison, we evaluate TDDR against DARC, SD3, and GD3 using the hyperparameters that yield better performance. The overall performance comparison is illustrated in Fig.~\ref{fig:better}, and the numerical results are detailed in Table~\ref{table.better}. The following observations can be made from these results.

1. DARC achieves the best average return and smallest standard deviation in two environments: Ant and HalfCheetah. DARC has the optimal average return in Walker2d.

2. TDDR performs similarly to SD3 in HalfCheetah. In Reacher and InvertedDoublePendulum, TDDR has the optimal standard deviation and average return. TDDR has the optimal average return in Hopper.

4. GD3 and SD3 achieve higher average returns than TDDR in Ant. In Walker2d, GD3 and SD3 have better stability compare with TDDR.

5. TDDR and SD3 exhibit relatively unstable performance in BipedalWalker. While TDDR achieves the highest average return in BipedalWalker, its larger standard deviation suggests relatively unstable performance. In contrast, DARC and GD3 demonstrate better stability in this environment.

6. In InvertedPendulum, TDDR and DARC exhibit good performance, while GD3 and SD3 outperform TD3 and DDPG. In LunarLanderContinuous, SD3 achieves the best average return, with TDDR surpassing DARC and GD3 in terms of average return, but GD3 achieves the smallest standard deviation.

In conclusion, different algorithms show various strengths and weaknesses in different environments, and the most suitable algorithm needs to be selected according to the environment.

\begin{figure*}
  \centering
      \subcaptionbox{}{\includegraphics[width = 0.32\textwidth]{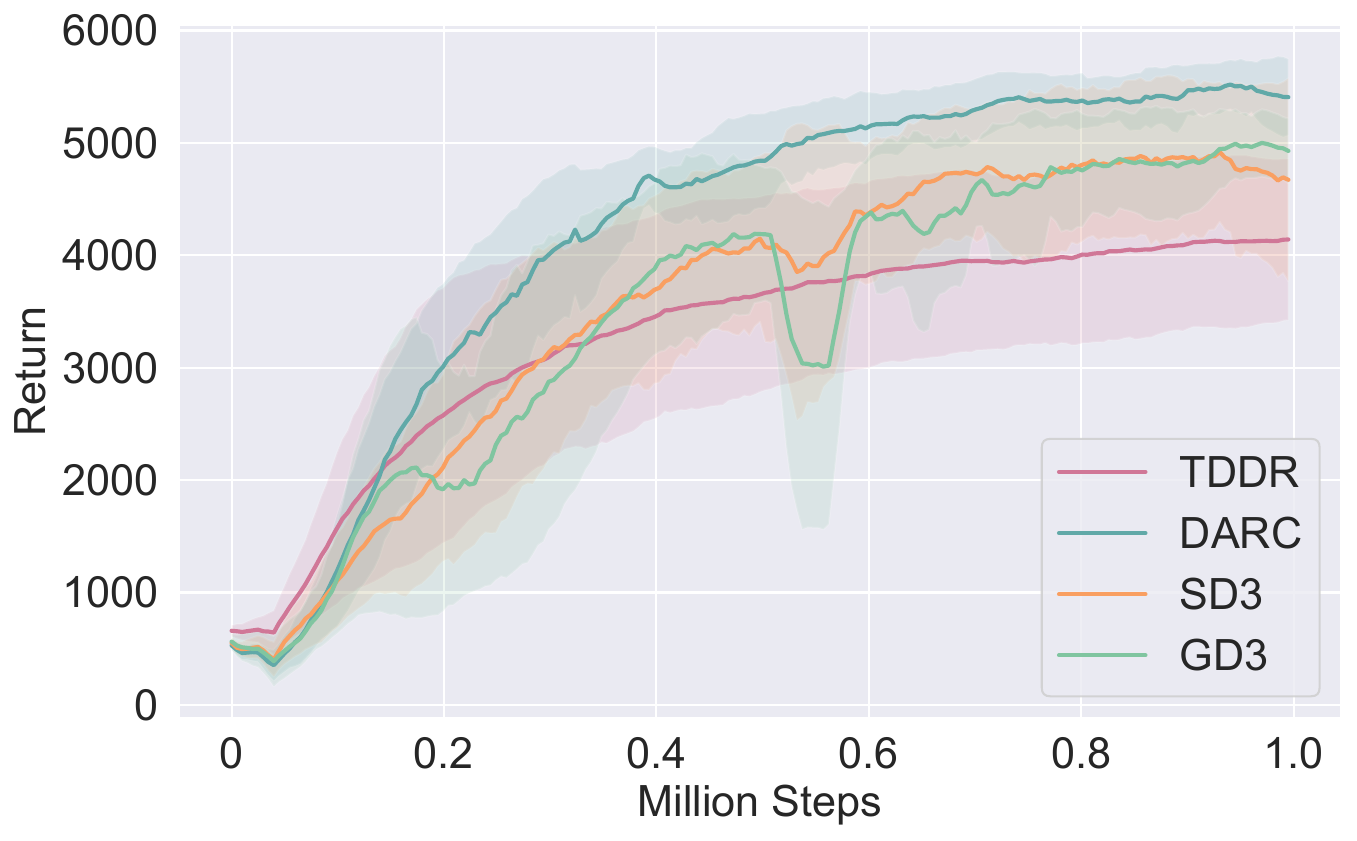}}
      \hfill
      \subcaptionbox{}{\includegraphics[width = 0.32\textwidth]{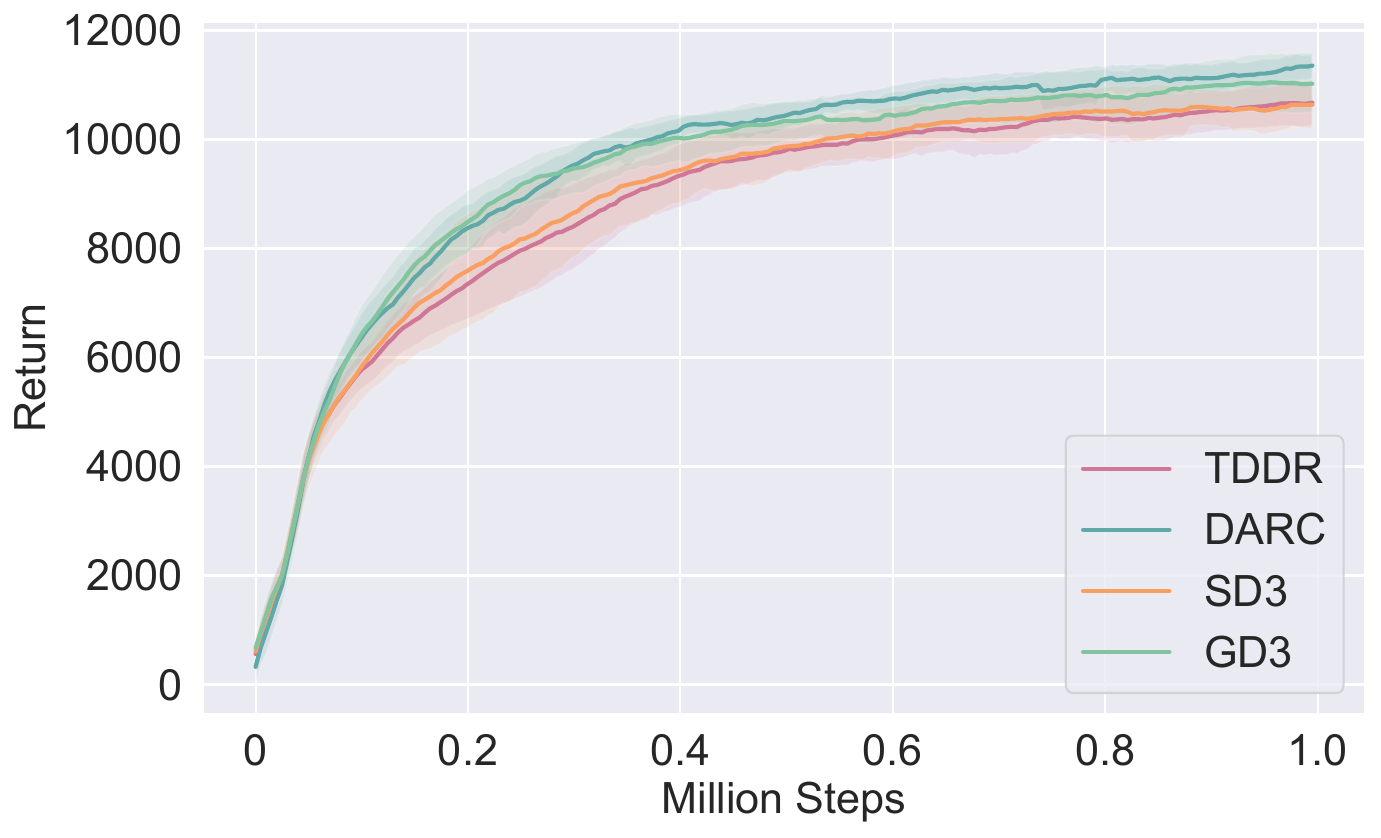}}
      \hfill
      \subcaptionbox{}{\includegraphics[width = 0.32\textwidth]{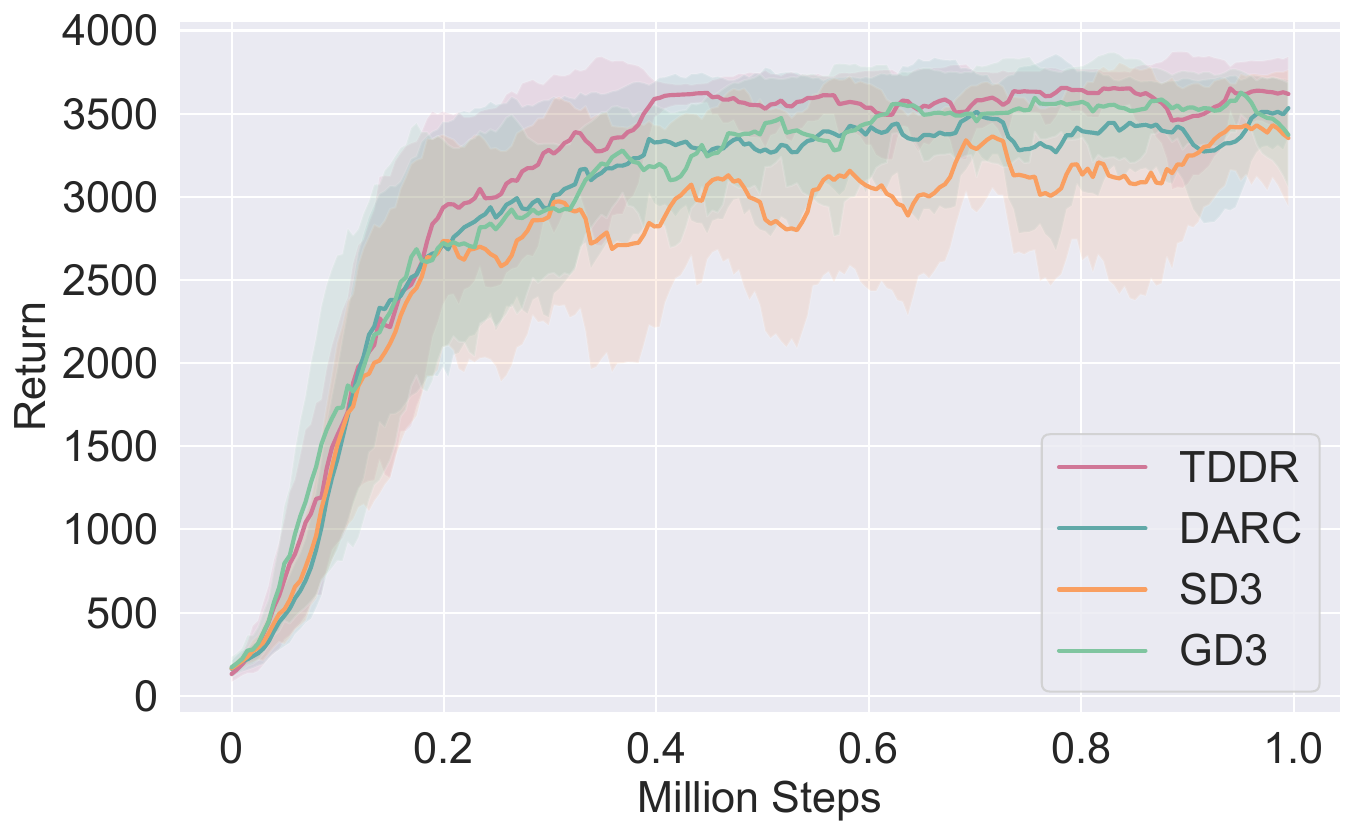}}
      \hfill
      \subcaptionbox{}{\includegraphics[width = 0.32\textwidth]{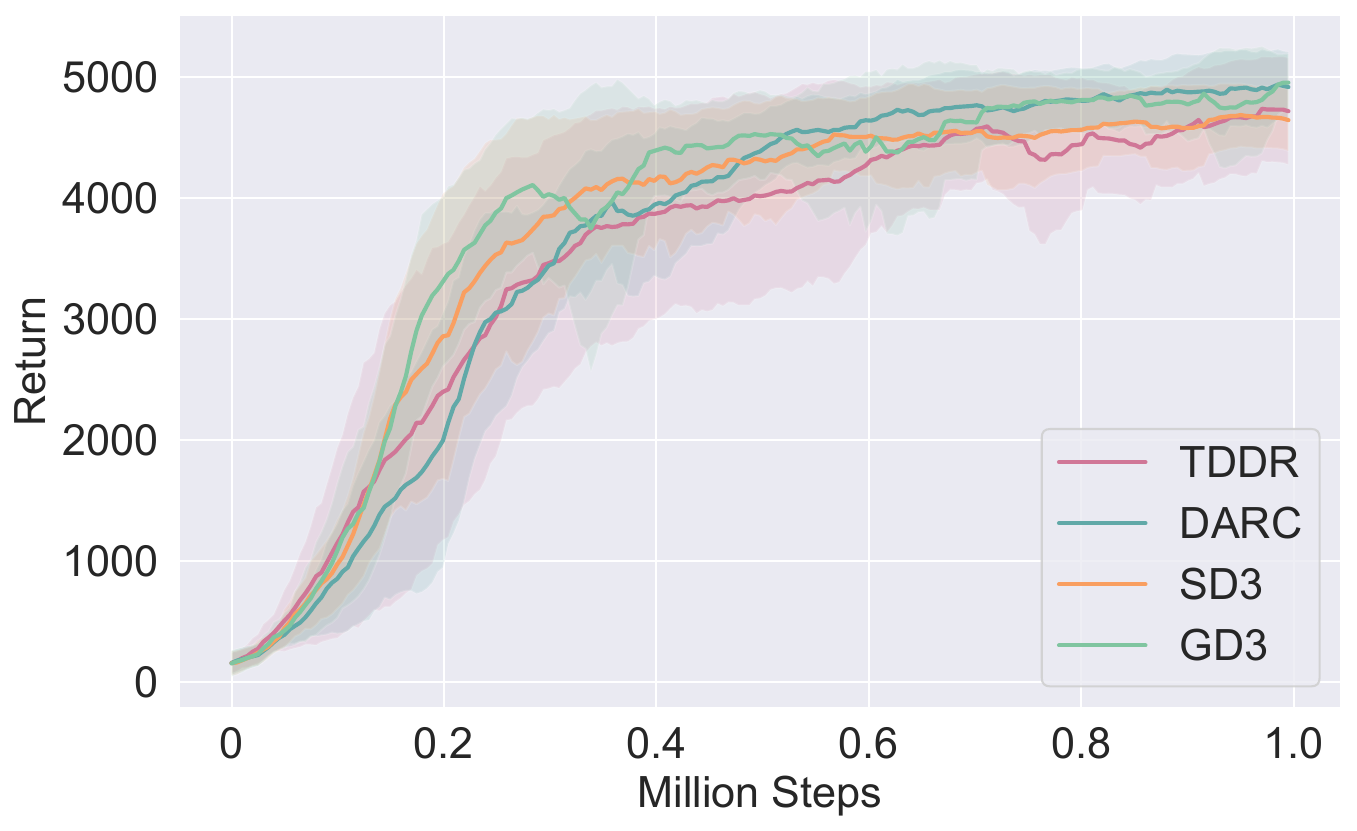}}
      \hfill
      \subcaptionbox{}{\includegraphics[width = 0.32\textwidth]{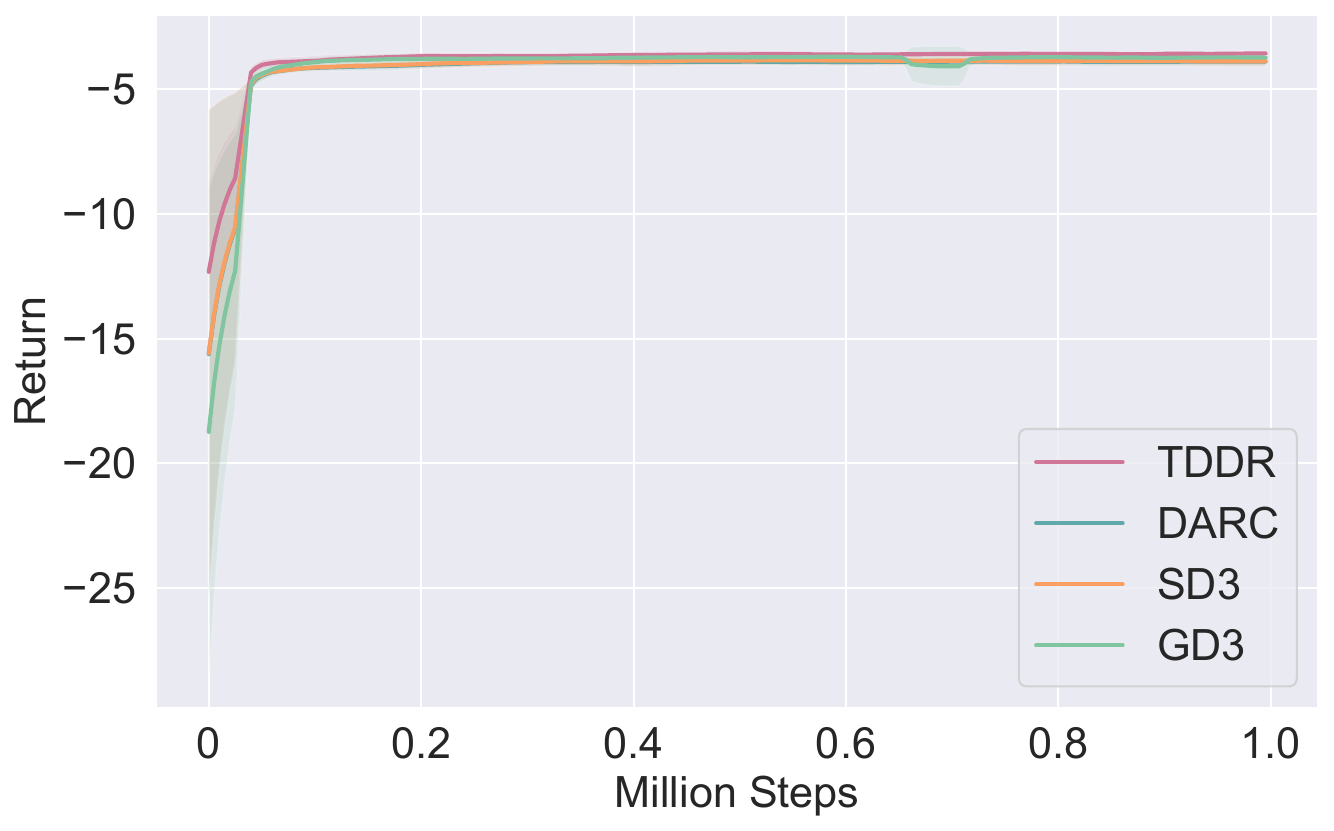}}
      \hfill
      \subcaptionbox{}{\includegraphics[width = 0.32\textwidth]{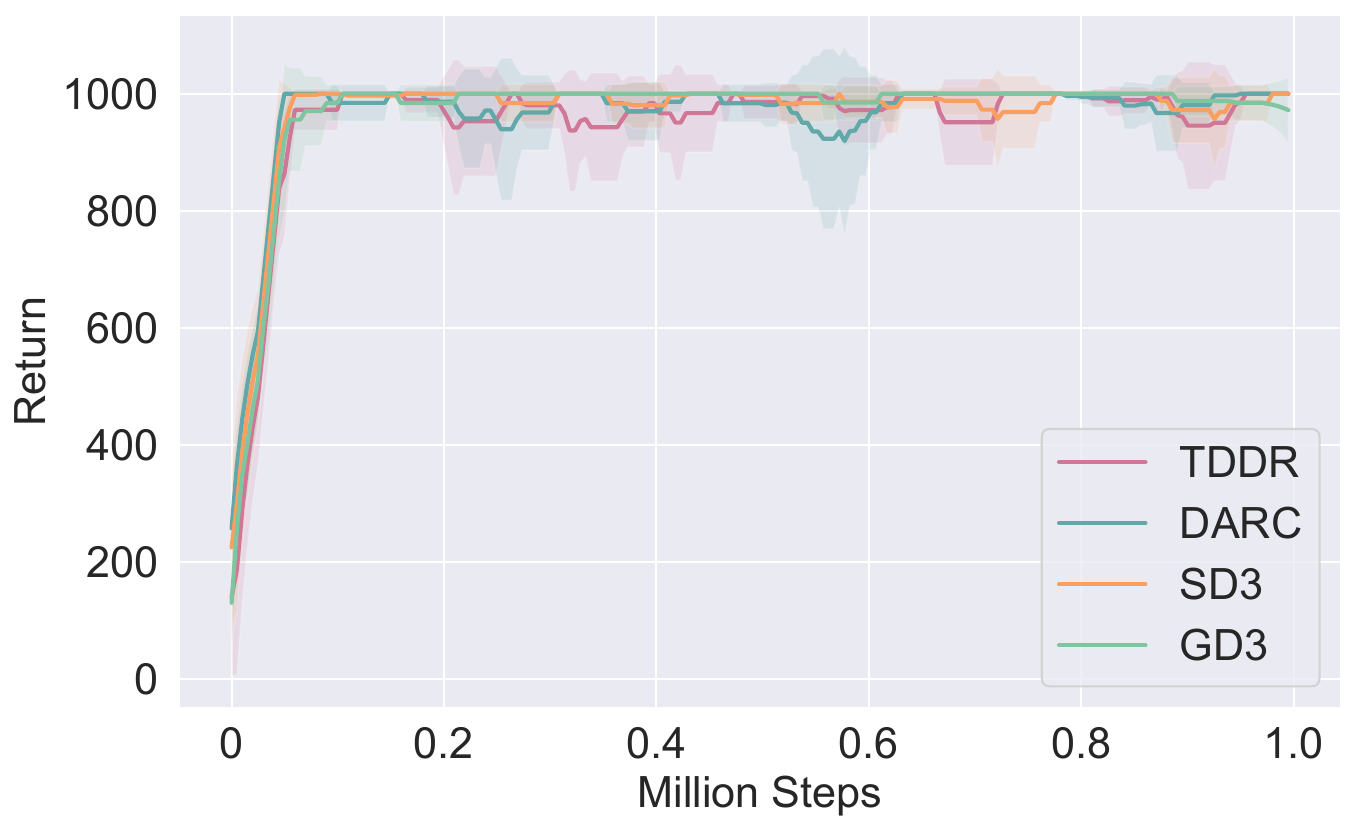}}
      \hfill
      \subcaptionbox{}{\includegraphics[width = 0.32\textwidth]{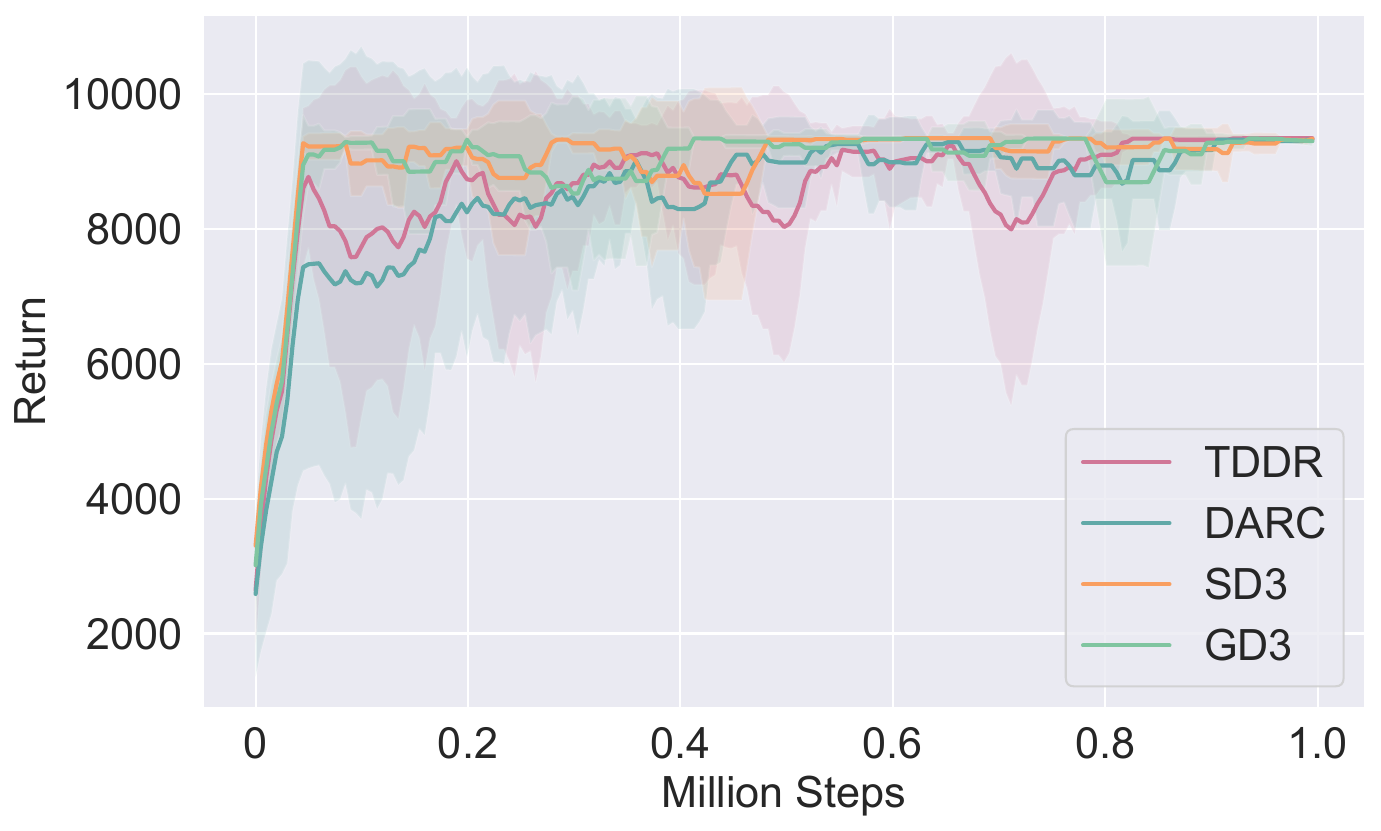}}
      \hfill
      \subcaptionbox{}{\includegraphics[width = 0.32\textwidth]{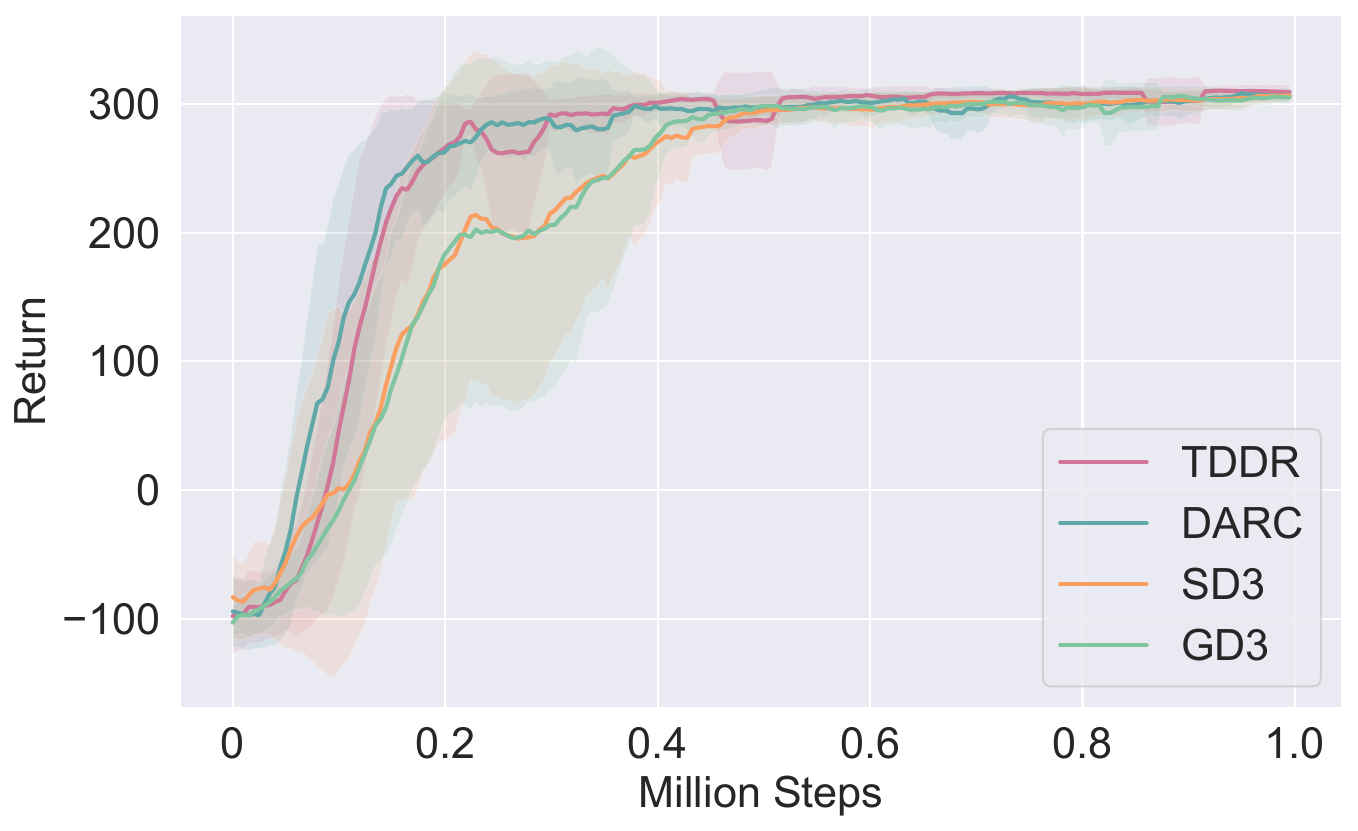}}
      \hfill
      \subcaptionbox{}{\includegraphics[width = 0.32\textwidth]{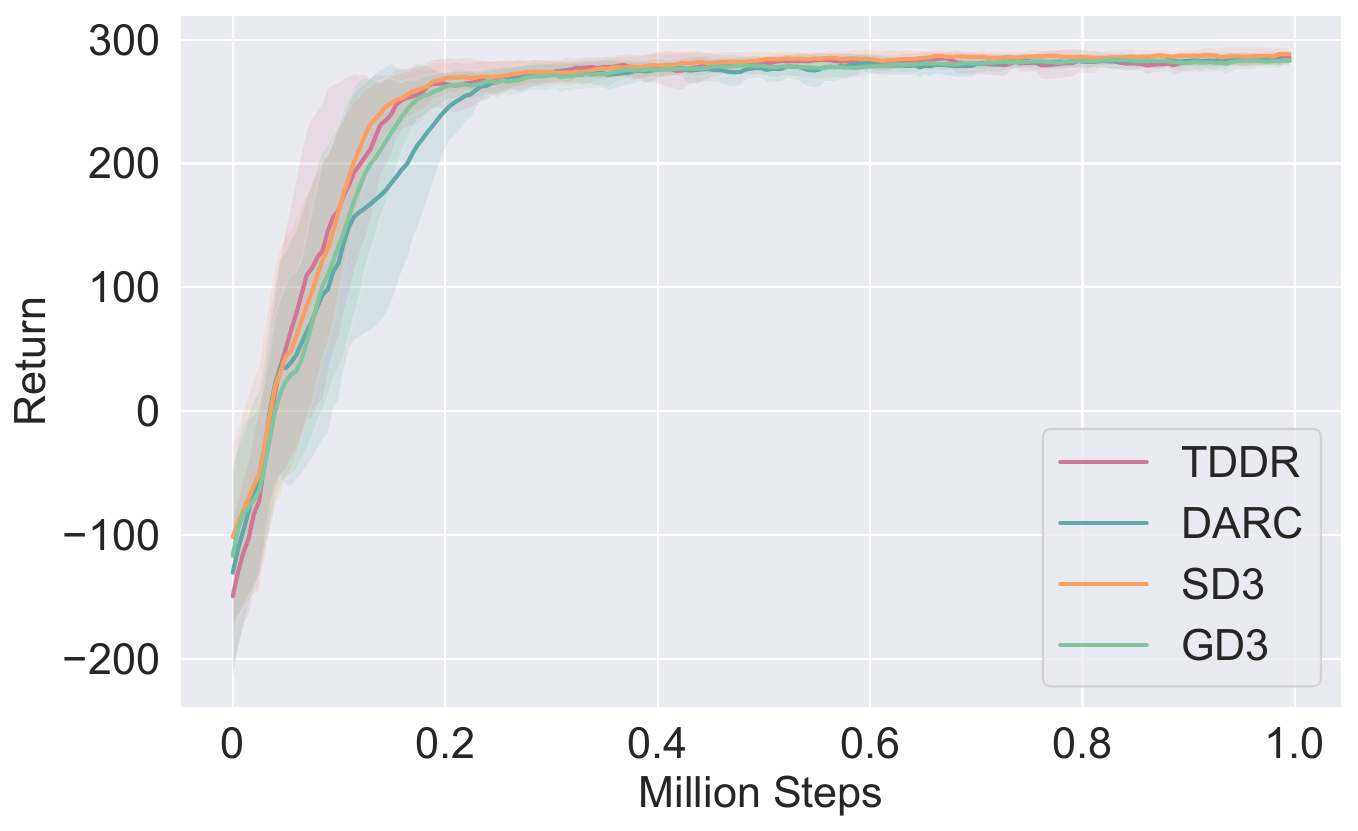}}
\caption{Comparison of TDDR with DARC, SD3, and GD3 with better hyperparameters across nine environments.  (a) Ant-v2, (b) HalfCheetah-v2, (c) Hopper-v2, (d) Walker2d-v2, (e) Reacher-v2, (f) InvertedPendulum-v2, (g) InvertedDoublePendulum-v2, (h) BipedalWalker-v3, (i) LunarLanderContinuous-v2.}
  \label{fig:better}
  \end{figure*}
\begin{table*}[!h] 
    \caption{Average return and standard deviation for Fig.~\ref{fig:better} }
    \centering
    \small
    \begin{tabular}{cccccc}
      \toprule
      Algorithms & TDDR & DARC & SD3 & GD3 \\
      \midrule 
      Ant-v2 & 4128.64 $\pm$ 726.16 & $\bm{5437.90 \pm 214.99}$ & 4733.87 $\pm$ 604.99 & 4989.52 $\pm$ 242.39 \\
      HalfCheetah-v2 & 10666.23 $\pm$ 391.58 & $\bm{11296.50 \pm 203.87}$ & 10637.48 $\pm$ 280.24 & 11031.75 $\pm$ 522.95 \\
      Hopper-v2 & $\bm{3631.20 \pm 160.41}$ & 3509.98 $\pm$ 114.13 & 3386.04 $\pm$ 162.18 & 3473.73 $\pm$ 113.60  \\
      Walker2d-v2 & 4733.41 $\pm$ 423.20 & $\bm{4900.95 \pm 243.97}$ & 4672.26 $\pm$ 241.39 & 4855.87 $\pm$ 349.54 \\
      Reacher-v2 & $\bm{-3.58 \pm 0.08}$ & -3.90 $\pm$ 0.19 & -3.88 $\pm$ 0.19 & -3.73 $\pm$ 0.18  \\
      InvertedPendulum-v2 & $\bm{1000.00 \pm 0.00}$ & $\bm{1000.00 \pm 0.00}$ & 983.07 $\pm$ 33.85 & 983.11 $\pm$ 33.77  \\
      InvertedDoublePendulum-v2 & $\bm{9342.17 \pm 12.13}$ & 9302.25 $\pm$ 31.39 & 9317.47 $\pm$ 33.72 & 9314.28 $\pm$ 32.61  \\
      BipedalWalker-v3 & $\bm{309.71 \pm 4.45}$ & 306.76 $\pm$ 2.76 & 306.00 $\pm$ 4.96 & 304.61 $\pm$ 2.41  \\
      LunarLanderContinuous-v2 & 285.08 $\pm$ 1.53 & 283.63 $\pm$ 1.40 & $\bm{286.69 \pm 2.03}$ & 282.34 $\pm$ 1.28  \\
     \bottomrule
    \end{tabular}
    \label{table.better}
  \end{table*}

\subsubsection{DARC, SD3, and GD3 with worse hyperparameters} 

In a separate comparison, we evaluate TDDR against DARC, SD3, and GD3 using the hyperparameters that yield worse performance. Only the Ant, HalfCheetah, and Walker2d environments are tested here, as their performance with better hyperparameters surpasses or is close to TDDR in terms of average return and standard deviation. The overall performance comparison is illustrated in Fig.\ref{fig:worse}, and the numerical results are detailed in Table\ref{table.worse}. The following observations can be made from these results.

1. In all three environments, the standard deviation of DARC significantly increases. Similarly, GD3 shows a substantial increase in standard deviation across all environments except HalfCheetah. Except for Ant, the standard deviation of SD3 also significantly increases.

2. We illustrate using DARC as an example, when the hyperparameters are appropriately chosen, the average return in Ant is 5437.90 with a standard deviation of 214.99. However, when the hyperparameters are not appropriately chosen, the average return decreases to 3483.91, while the standard deviation significantly increases to 1842.89. This indicates that under worse hyperparameter selection, the performance of DARC in Ant is degraded. In the remaining two environments, the performance of DARC is also significantly affected by hyperparameter selection. Additionally, GD3 and SD3 similarly show clear signs of being affected by hyperparameter choices.

In conclusion, the quality of hyperparameters significantly affects the performance and stability of RL algorithms. With better hyperparameters, algorithms usually achieve higher average returns and smaller standard deviations. However, when hyperparameters are worse, the performance and stability of the algorithms decrease significantly. Therefore, carefully tuning hyperparameters to optimize the performance of RL algorithms is crucial. In addition, compared to DARC, SD3, and GD3, which require additional hyperparameters, TDDR, requiring none, exhibits better stability. It is worth noting that the additional hyperparameters are relative to TD3. For instance, DARC includes a regularization parameter, while SD3 and GD3 require a hyperparameter such as NNS.

\begin{figure*}[!t]

      \subcaptionbox{}{\includegraphics[width = 0.32\textwidth]{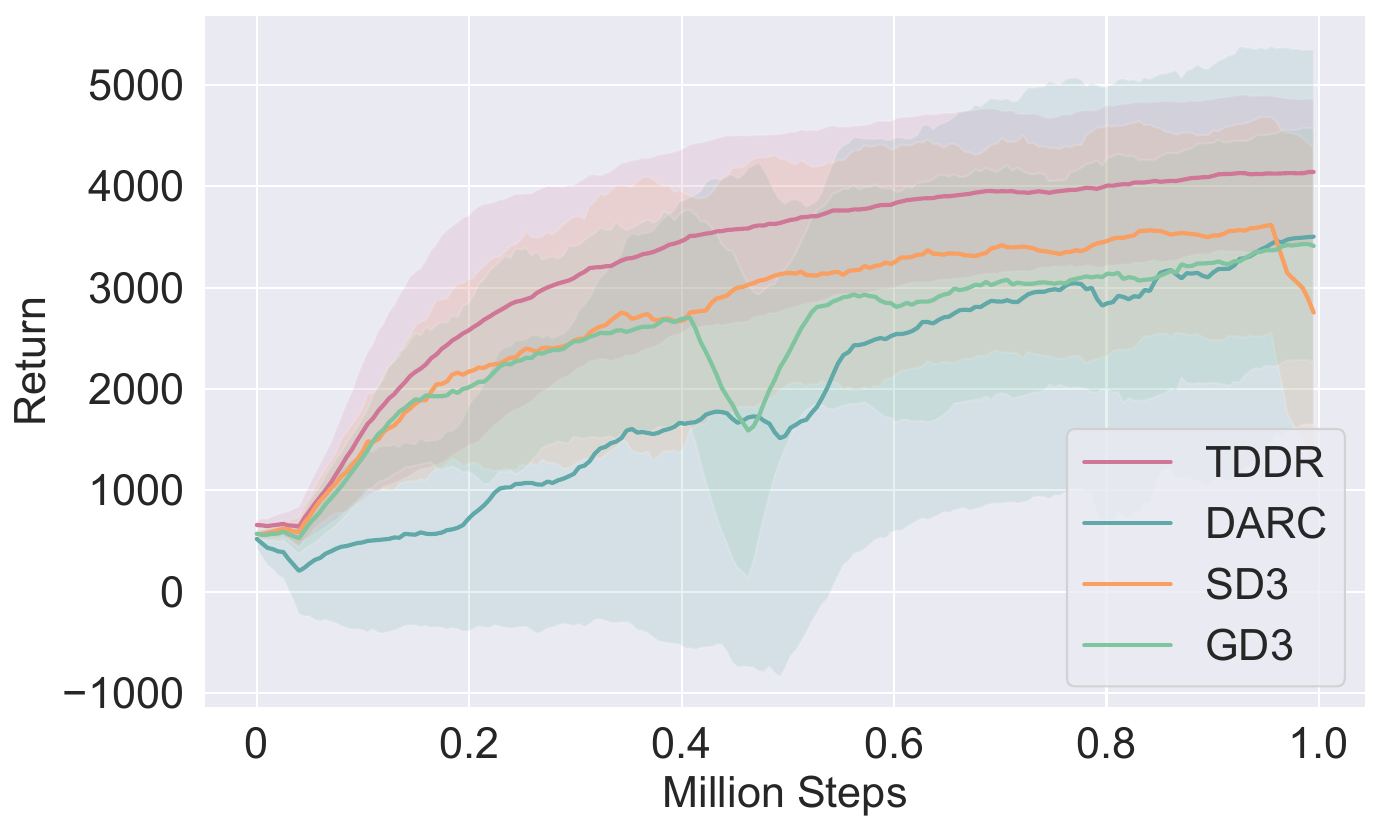}}
      \hfill
      \subcaptionbox{}{\includegraphics[width = 0.32\textwidth]{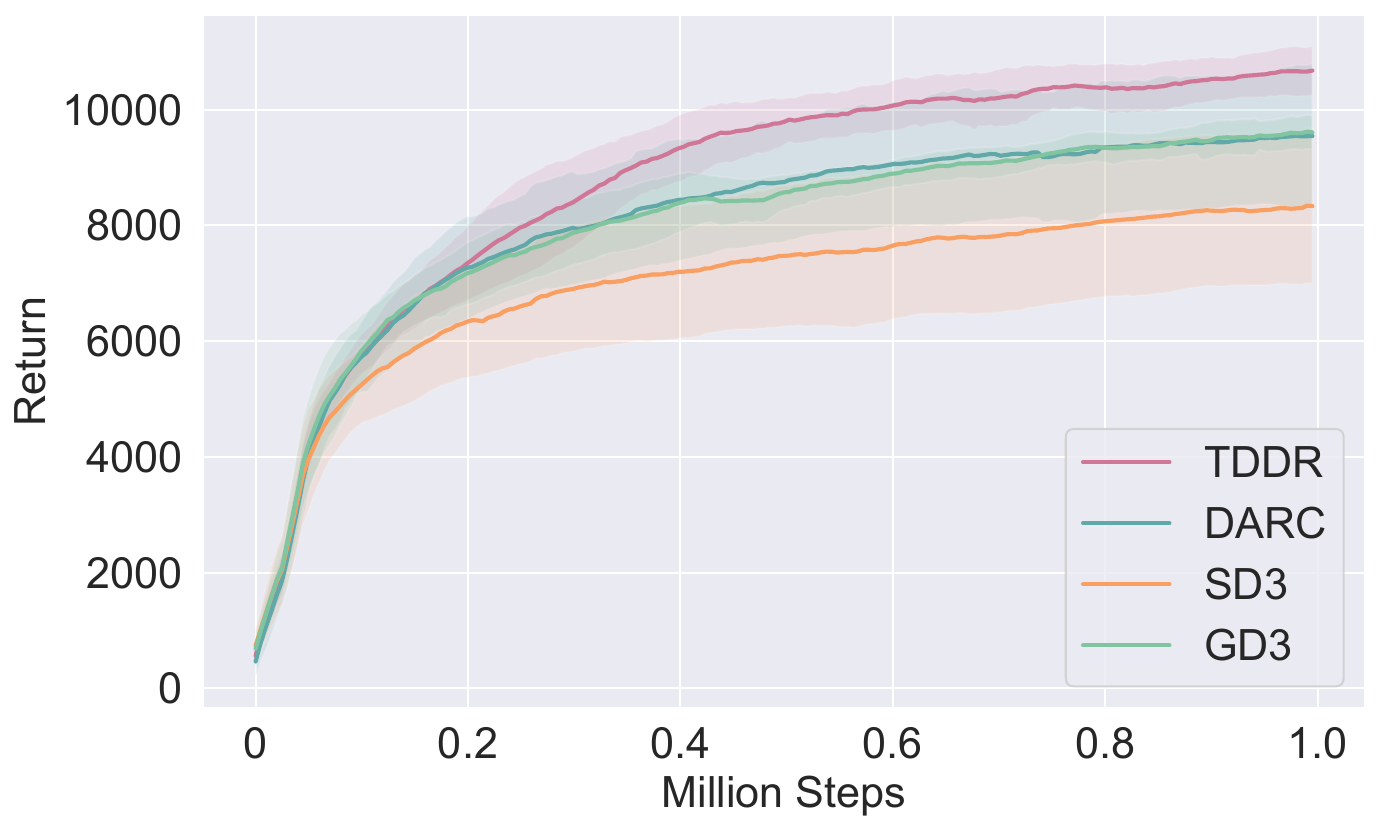}}
      \hfill
      \subcaptionbox{}{\includegraphics[width = 0.32\textwidth]{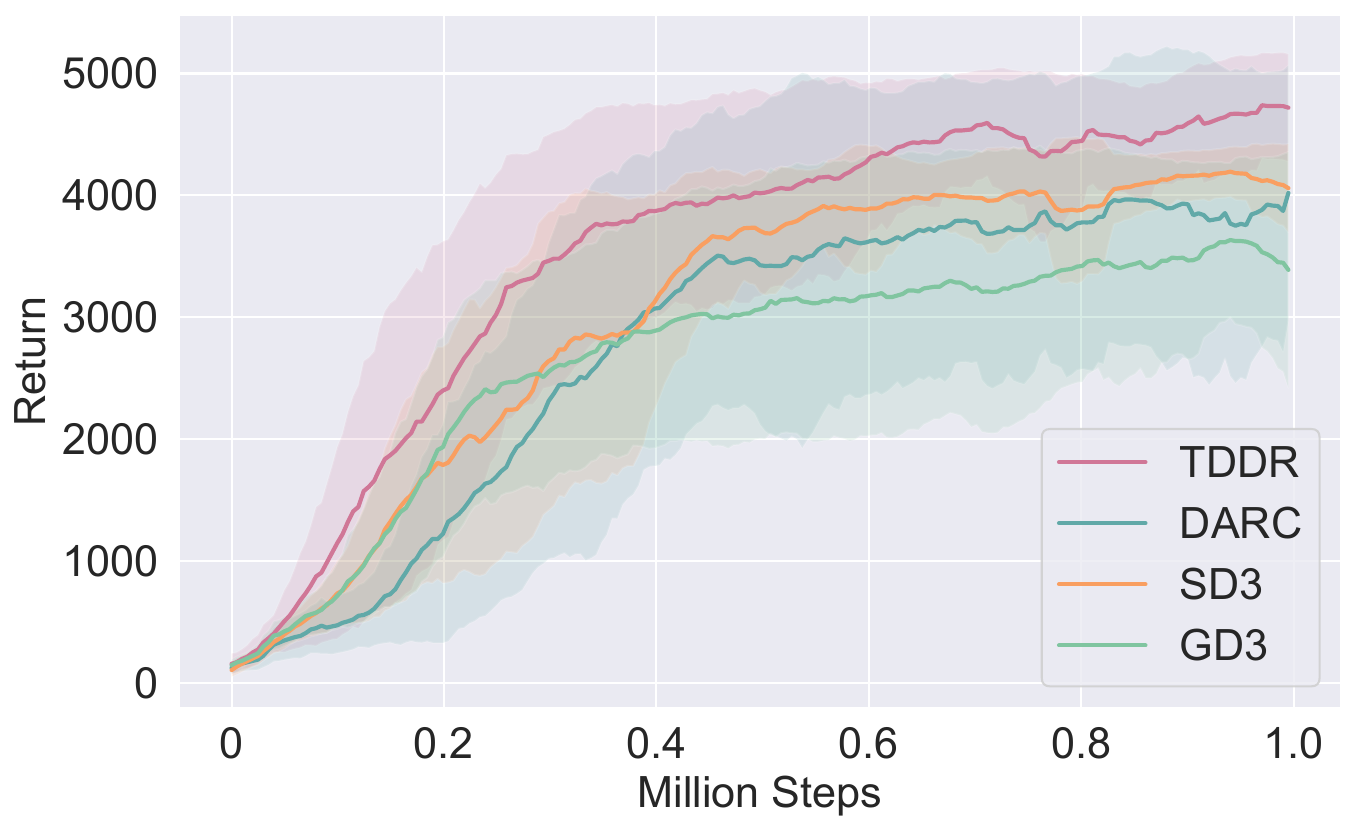}}
  \caption{Comparison of TDDR with DARC, SD3, and GD3 with worse hyperparameters across three environments. (a) Ant-v2, (b) HalfCheetah-v2, (c) Walker2d-v2.}
  \label{fig:worse}
  \end{figure*}

\begin{table*}[!t] 
    \caption{Average return and standard deviation for Fig.~\ref{fig:worse}}
    \centering
    \small
    \begin{tabular}{cccccc}
      \toprule
      Algorithms & TDDR & DARC & SD3 & GD3 \\
      \midrule 
      Ant-v2 & $\bm{4128.64 \pm 726.16}$ & 3483.91 $\pm$ 1842.89 & 3092.57 $\pm$ 159.92 & 3414.89 $\pm$ 1119.75 \\
      HalfCheetah-v2 & $\bm{10666.23 \pm 391.58}$ & 9536.55 $\pm$ 1201.46 & 8288.06 $\pm$ 1291.81 & 9601.94 $\pm$ 214.92 \\
      Walker2d-v2 & $\bm{4733.41 \pm 423.20}$ & 3925.50 $\pm$ 949.91 & 4124.68 $\pm$ 236.16 & 3518.46 $\pm$ 737.92  \\
     \bottomrule
    \end{tabular}
    \label{table.worse}
  \end{table*}

\subsection{Ablation Discussion}

TDDR consists of two major feature components: DA-CDQ and CRA, which are are analyzed here using ablation experiments.

First, the DA-CDQ generates actions as in \eqref{aprime}. When one actor is removed,  either $\min_{i=1,2}(Q_{\theta_i^{\prime}}(s^{\prime},a_1^{\prime}))$ or $\min_{i=1,2}(Q_{\theta_i^{\prime}}(s^{\prime},a_2^{\prime}))$ would disappear, rendering TDDR invalid, hence making it impossible to conduct this part of the ablation study.

At the same time, it can be observed that when removing an actor, CRA becomes ineffective, meaning that $\delta_1$ and $\delta_2$ parts cease to be effective. At this point, TDDR becomes an algorithm based on a single actor, which is equivalent to TD3. Assuming that $\pi_{\phi_2}$ is deleted, it can be found that CRA of TDDR contains only $\delta_1$, which can no longer be compared with $\delta_2$, thus losing its regularization significance. Hence, the performance evaluation of TDDR compared to TD3 can be regarded as an ablation study with one actor removed.

Second, we analyze CRA. Compared to removing an actor, removing either $\delta_1$ or $\delta_2$ directly destroys TDDR. First, if it is impossible to compare $\delta_1$ and $\delta_2$, then it is not feasible to calculate the TD target, thus \eqref{psiTDDR} are invalid. Second, it would lead to the inability to update the hyperparameters of the critic target network because in TDDR one actor is responsible for one critic. Therefore, CRA is critical to TDDR, and removing CRA would make TDDR ineffective.

\section{Conclusion}
\label{sec:Conclusion}

In this paper, we combine double actors with double critics to achieve better Q-value estimation and propose the DA-CDQ and CRA based on TD error, leading to the introduction of TDDR. TDDR significantly outperforms both TD3 and DDPG. Notably, when one actor is removed, TD3 and TDDR become equivalent. Another advantage of TDDR is that it does not require additional hyperparameters beyond those in TD3, unlike the recently proposed value estimation methods DARC, SD3, and GD3. These three algorithms introduce varying numbers of additional hyperparameters relative to TD3. The comparative performance indicates that careful tuning of hyperparameters is crucial for satisfactory learning performance in these algorithms, a requirement that can be avoided with TDDR. Future work will explore the feasibility of applying TDDR concepts to other reinforcement learning frameworks.

\appendix

The law of total expectation and the law of total variance \cite{Saltelli2008} are referenced here, as they are utilized in the proof of Theorem~\ref{theorem123}. If a random variable $A$ can take the values of other random variables $B$ and $C$, specifically, if $A$ equals $B$ with probability $p$ and $C$ with probability $1-p$, then $\bE[A]$ can be calculated using the law of total expectation:
\begin{align} 
\bE[A] = p\bE[B] + (1-p)\bE[C]. \label{EABC}
\end{align}
Similarly, the variance $\Var(A)$ can be determined using the law of total variance:
\begin{align}
    \Var(A) = &p\Var(B) + (1-p)\Var(C) \nonumber \\
    & + p(1-p)(\bE[B] - \bE[C])^2. \label{varABC}
\end{align}

\bibliographystyle{IEEEtran}
\bibliography{IEEEabrv,mychhrefs}

\end{document}